\newcommand{\minimize}{\operatorname*{minimize}}
\newcommand{\reals}{\mathbb R}
\newcommand{\argmin}{\mathop{\rm argmin}}
\newcommand{\eg}{{\it e.g.}}
\newcommand{\ie}{{\it i.e.}}
\theoremstyle{plain}
\theoremstyle{definition}
\theoremstyle{remark}
\pgfplotsset{cycle list/Dark2}
\pgfplotsset{cycle list/Accent}
\pgfplotsset{cycle list/Paired}
\pgfplotsset{cycle list/Set1}
\title{\emph{Q-Palette}: Fractional-Bit Quantizers Toward \\
Optimal Bit Allocation for Efficient LLM Deployment
}
\author{
Deokjae Lee$^{1,2}$ \qquad Hyun Oh Song$^{1,2}$\thanks{Corresponding author}\\
$^1$Seoul National University,  
$^2$Neural Processing Research Center \\
\texttt{\{bdbj, hyunoh\}@mllab.snu.ac.kr}
}
\begin{document}

\maketitle

\begin{abstract}
We study weight-only post-training quantization (PTQ), which quantizes the weights of a large language model (LLM) without retraining, using little or no calibration data. Weight-only PTQ is crucial for reducing the memory footprint and latency of LLM inference, especially in memory-bound, small-batch inference scenarios, such as personalized inference on edge devices. Despite its importance, irregular weight distributions with heavy-tailed outliers in LLMs complicate quantization, recently motivating rotation-based methods that transform weights into near-Gaussian distributions, which are more regular with fewer outliers, thereby reducing quantization error. In this work, we first derive the information-theoretically optimal bit allocation for Gaussianized weights under given bit budgets, revealing that fine-grained fractional-bit quantizers approaching the Gaussian distortion-rate bound are essential to achieve near-optimal quantization performance. To bridge this theoretical insight and practical implementation, we introduce \emph{Q-Palette}, a versatile collection of fractional-bit quantizers that range from trellis-coded quantizers offering near-optimal distortion to simpler vector and scalar quantizers optimized for faster inference, all efficiently implemented with optimized CUDA kernels across various bitwidths. Furthermore, leveraging Q-Palette as a foundational component, we propose a novel mixed-scheme quantization framework, jointly optimizing quantizer choices and layer fusion decisions given resource constraints. The code is available at \href{https://github.com/snu-mllab/Q-Palette}{https://github.com/snu-mllab/Q-Palette}.
\end{abstract}

\section{Introduction}
Large language models (LLMs) have recently achieved significant success across diverse tasks and are increasingly being deployed on resource-limited edge devices, such as laptops or smartphones~\citep{gpt4,lmdeploy,llamacpp}.
However, these edge devices typically have limited memory resources and often process small-batch workloads, making inference severely memory-bound.
Weight-only quantization has thus become essential, allowing models to achieve significantly greater compression at similar levels of performance compared to quantizing both weights and activations.
Moreover, recent studies have demonstrated that weight-only quantization, beyond its well-known compression advantages, can also significantly accelerate inference speed in small-batch decoding scenarios by alleviating memory bottlenecks~\citep{squeezellm,anyprec,flute}.
Specifically, we address weight-only post-training quantization (PTQ), enabling model quantization without costly retraining or extensive calibration data, which are common constraints in real-world deployments~\citep{zeroq,hubara}.

However, quantizing LLM weights remains challenging due to inherently irregular, heavy-tailed distributions containing outliers that significantly broaden quantization ranges~\citep{ostquant,squeezellm,oaq,smoothquant}.
To address this, recent research introduced a theoretically grounded approach known as \emph{incoherence processing}, which applies rotation matrices (\eg, random Hadamard transforms) to weight matrices, reducing outliers by modifying distributions into approximately Gaussian forms~\citep{hadamard,quip,quarot,quipsharp}.

We begin with a natural question: if ideal Gaussian quantizers were available at arbitrary fractional bitwidths, how should we allocate bits across layers to minimize performance degradation under a fixed memory budget?
Building upon the \emph{linearity theorem}~\citep{higgs}, which approximates performance degradation by quantization as a weighted sum of layer-wise mean squared errors, we derive an information-theoretically optimal bit allocation strategy for Gaussianized weights.
Our analysis reveals that fine-grained fractional-bit quantizers that closely match their theoretical distortion bounds are essential to approaching the quantization performance predicted by theory.
However, existing sophisticated Gaussian quantizers, such as trellis-coded quantization, have only been implemented with fused kernels for a limited set of integer bitwidths (\eg, $2$, $3$, $4$ bits), with little or no support for batch sizes larger than one \citep{gray1998,trellis,quipsharp,qtip}.

\input{figs/figure1}

To bridge theoretical insights with practical quantization, we introduce \emph{Q-Palette}, a versatile set of fractional-bit quantizers, ranging from trellis-coded quantizers (TCQ) for near-optimal distortion to simpler vector and scalar quantizers for low latency, covering diverse accuracy-latency trade-offs. 
We provide optimized CUDA kernels supporting a wide range of fractional bitwidths with broader batch size support than prior sophisticated quantization methods (\eg, QTIP~\citep{qtip}).
To enable even finer bitwidth control, we further propose half-TCQ, a novel TCQ variant that mixes two TCQ quantizers of different bitwidths within a single layer (\eg, 2.5 and 3.0 bits) to realize intermediate bitwidths such as 2.75 bits, and extend our CUDA kernels to support this variant.

We integrate Q-Palette into a resource-constrained mixed-scheme quantization (MSQ) framework to demonstrate its practical utility.
To further improve accuracy-latency trade-offs, we propose \emph{fusion-aware MSQ}, the first MSQ approach that jointly optimizes quantizer selection and layer fusion, introducing a new optimization dimension (see \Cref{fig:figure1}). 
Here, linear layers sharing the same input (\eg, query, key, and value projections in a Transformer block) can be fused into a single linear layer, reducing memory accesses and kernel launches \citep{taso,ios,anyprec}. 
By incorporating layer fusion, fusion-aware MSQ achieves significant gains in accuracy-latency trade-offs. 
Extensive experiments on LLaMA 2, LLaMA 3, and Qwen models demonstrate that our MSQ framework with Q-Palette consistently outperforms strong data-free and data-aware weight-only PTQ baselines under both memory- and latency-constrained settings.

\section{Preliminaries}
\label{sec:preliminaries}

\subsection{Linearized surrogate objective for post-training quantization}

Previous studies have approximated the performance degradation in neural networks induced by PTQ as a linear combination of layer-wise surrogate losses derived from second-order Hessian approximations \citep{hawqv2,chen2021towards}. 
In the context of LLMs, where performance is typically measured by perplexity, the \textit{linearity theorem} shows that the perplexity increase induced by quantization can be accurately approximated as a weighted sum of per-layer quantization errors \citep{higgs}.
This surrogate is especially valuable for data-free quantization scenarios where Hessian-based surrogate losses are unavailable. Formally, the linearized surrogate is expressed as:
\begin{equation}
\mathcal{L}(\{Q(W_l)\}_{l=1}^L) - \mathcal{L}(\{{W_l}\}_{l=1}^L) \approx \sum_{l=1}^L a_l \underbrace{\|Q(W_l) - W_l\|^2 / \|W_l\|^2}_{\eqqcolon~\mathrm{err}(Q; W_l)},\nonumber
\end{equation}
where $\mathcal{L}(\cdot)$ denotes the perplexity loss, $W_l \in \mathbb{R}^{d_l^{\text{in}} \times d_l^{\text{out}}}$ is the weight matrix of layer $l$, $Q(\cdot)$ is a quantization function, $a_l$ is the empirically estimated sensitivity coefficient for layer $l$, and $\mathrm{err}(Q; W_l)$ is the normalized quantization error of layer $l$.

Leveraging this surrogate, the memory-constrained MSQ problem with a set of candidate quantizers $\mathcal{Q}$ can be formulated as a multiple-choice knapsack problem (MCKP):
\begin{align}
\label{eq:mpq}
\minimize_{P_{lq}\in\{0,1\}}~~&~~ \sum_{l=1}^L a_l\left(\sum_{q=1}^{|\mathcal{Q}|} P_{lq} \cdot \mathrm{err}(Q_q; W_l)\right)\\
\mathrm{subject~to}&~~ \sum_{q=1}^{|\mathcal{Q}|} P_{lq} = 1,\quad \forall 1\le l \le L,\nonumber\\
&~~ \sum_{l=1}^L\sum_{q=1}^{|\mathcal{Q}|} P_{lq} \cdot \text{bit}(Q_q;W_l) d_l^\text{in} d_l^\text{out}  \leq M,  \nonumber
\end{align}
where $Q_q$ denotes a candidate quantizer, $\text{bit}(Q_q;W_l)$ is the average number of bits per weight component for the weight matrix $W_l$ quantized by $Q_q$, $P_{lq} \in \{0,1\}$ is a binary indicator selecting quantizer $Q_q$ for layer $l$, and $M$ denotes the total memory budget (in bits) allocated for quantized model \citep{mckp}.
This formulation explicitly casts MSQ as a combinatorial optimization problem grounded in a linearized performance surrogate, providing a principled framework for optimal bit allocation under strict memory constraints  \citep{higgs,chen2021towards}.

\section{Q-Palette: fractional-bit quantizers}

\subsection{Motivation and design goals}
\label{sec:motivation}
Building on the theoretical foundation introduced in Section~\ref{sec:preliminaries}, we now derive the information-theoretically optimal bit allocation strategy and discuss its implications for the design of practical quantizers. Under the assumption that weight matrices are Gaussianized via incoherence processing, the quantization problem can be viewed as a Gaussian source coding problem. In this setting, classical rate-distortion theory establishes a fundamental lower bound on expected quantization error as $\mathbb{E}[\mathrm{err}(Q)] \geq 2^{-2\mathrm{bit}(Q)}$\citep{ratedist}.
Assuming ideal Gaussian quantizers that achieve this bound at arbitrary fractional bitwidths $b_l\ge \eta$, the memory-constrained MSQ problem \eqref{eq:mpq} simplifies to:
\begin{align}
    \minimize_{b_l\ge \eta}~~ &~~ \sum_{l=1}^L a_l 2^{-2 b_l}\label{eq:mpq_frac}\\
    \mathrm{subject~to} &~~ \sum_{l=1}^L b_l d_l^{\mathrm{in}} d_l^{\mathrm{out}} \leq M,\nonumber
\end{align}
where $b_l$ is the fractional bitwidth allocated to layer $l$, and $\eta>0$ is a minimum bitwidth threshold introduced to avoid degenerate cases such as assigning $0$-bit to a layer. This formulation admits a closed-form solution as stated in \Cref{thm:optfrac}.

\begin{restatable}[Optimal bit allocation with ideal Gaussian quantizers]{theorem}{optfrac}
\label{thm:optfrac}
If the budget $M$ is feasible, \ie, $M\ge \eta \sum_{l=1}^L d_l^\text{in} d_l^\text{out}$, then the optimal fractional bit allocation $\{b_l^*\}$ for problem~\eqref{eq:mpq_frac} is given by
\[
    b_l^* = \max\left\{\eta,\frac{1}{2\ln(2)}\left(\ln\frac{a_l}{d_l^{\mathrm{in}} d_l^{\mathrm{out}}}\right) + C\right\},\quad\forall\,1\le l\le L,
\]
for the constant $C$ that satisfies the memory constraint $\sum_l b_l^* d_l^{\mathrm{in}} d_l^{\mathrm{out}} = M$.
\end{restatable}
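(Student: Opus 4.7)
The plan is to recognize problem~\eqref{eq:mpq_frac} as a smooth convex program with linear constraints and solve it via KKT, yielding a water-filling allocation. The objective $\sum_l a_l 2^{-2 b_l}$ is strictly convex (assuming $a_l > 0$) and strictly decreasing in each $b_l$, and the feasible set is nonempty by the hypothesis $M \ge \eta \sum_l d_l^{\mathrm{in}} d_l^{\mathrm{out}}$. Strict coordinate-wise monotonicity forces the budget inequality to be tight at any optimum, so I may replace $\sum_l b_l d_l^{\mathrm{in}} d_l^{\mathrm{out}} \le M$ with equality. Slater's condition holds, so KKT conditions are both necessary and sufficient.

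Next, I form the Lagrangian
\begin{equation*}
    \mathcal{L}(b,\lambda,\mu) = \sum_l a_l 2^{-2 b_l} + \lambda\Bigl(\sum_l b_l d_l^{\mathrm{in}} d_l^{\mathrm{out}} - M\Bigr) - \sum_l \mu_l (b_l - \eta),
\end{equation*}
with $\lambda \ge 0$ and $\mu_l \ge 0$, and set $\partial\mathcal{L}/\partial b_l = 0$ to obtain $2\ln(2)\,a_l\,2^{-2 b_l^*} = \lambda d_l^{\mathrm{in}} d_l^{\mathrm{out}} - \mu_l$. Complementary slackness $\mu_l(b_l^* - \eta) = 0$ splits the analysis into two regimes. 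If $b_l^* > \eta$, then $\mu_l = 0$ and solving for $b_l^*$ produces
\begin{equation*}
    b_l^* = \frac{1}{2\ln(2)}\ln\frac{a_l}{d_l^{\mathrm{in}} d_l^{\mathrm{out}}} + C, \qquad C \coloneqq \frac{1}{2\ln(2)}\ln\frac{2\ln(2)}{\lambda}.
\end{equation*}
If $b_l^* = \eta$, the constraint $\mu_l \ge 0$ is equivalent to $\tfrac{1}{2\ln(2)}\ln(a_l/(d_l^{\mathrm{in}} d_l^{\mathrm{out}})) + C \le \eta$. The two regimes merge cleanly into the stated closed form $b_l^* = \max\bigl\{\eta,\ \tfrac{1}{2\ln(2)}\ln(a_l/(d_l^{\mathrm{in}} d_l^{\mathrm{out}})) + C\bigr\}$.

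Finally, I pin down $C$ through the active budget equation $\sum_l b_l^*(C) d_l^{\mathrm{in}} d_l^{\mathrm{out}} = M$. As a function of $C$, the left-hand side is continuous and nondecreasing, equals $\eta \sum_l d_l^{\mathrm{in}} d_l^{\mathrm{out}}$ for all sufficiently negative $C$, and diverges to $+\infty$ as $C \to \infty$. By the feasibility hypothesis and the intermediate value theorem, some $C$ meets the budget exactly; strict convexity of the objective then certifies that the corresponding $b^*$ is the unique minimizer. I expect no substantive obstacle: the only delicacy is the nondifferentiability of the optimal policy at the kink $b_l = \eta$, which is handled transparently by complementary slackness, and the rest reduces to routine Lagrangian calculus.
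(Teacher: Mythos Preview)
Your proposal is correct and follows essentially the same route as the paper: form the Lagrangian with multipliers for the budget and the lower-bound constraints, apply stationarity and complementary slackness to obtain the two regimes, merge them into the $\max$ formula, and then fix $C$ by tightness of the budget. Your treatment is in fact slightly more careful than the paper's (you invoke Slater's condition and use the intermediate value theorem to guarantee existence of $C$), but the underlying argument is identical.
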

\begin{proof}
See \Cref{app:thm1proof}.
\end{proof}

In practice, however, quantization must be performed using a finite set of non-ideal quantizers $\mathcal{Q}=\{Q_1,\dots,Q_N\}$, which introduces a gap between theoretical optimality and actual achievable performance. The extent of this gap depends primarily on two factors: (1) how closely each quantizer approaches the ideal distortion bound $2^{-2\mathrm{bit}(Q)}$, and (2) how finely the available bitwidths can approximate the optimal fractional bit allocations ${b_l^*}$. 
Please refer to \Cref{app:quantization_gap} for further analysis.

This motivates the need for practical quantizers that are both accurate and available at fine-grained fractional-bit intervals. Moreover, quantizers often exhibit a trade-off between distortion and computational efficiency: more sophisticated quantizers may offer lower error but incur higher inference costs. 
These considerations motivate the design of a practical quantization suite that supports fine-grained fractional bitwidths while also accounting for trade-offs between quantization error and computational efficiency.
Guided by these goals, we design Q-Palette as a versatile collection of quantizers tailored to balance quantization error and computational efficiency across deployment scenarios.

\subsection{Quantization schemes in Q-Palette}

\begin{table}[t]
    \centering
    \caption{Quantizers, kernel implementations, and supported bitwidth intervals in Q-Palette.}
    \label{tab:qPalette_quantizers}
    \resizebox{0.9\linewidth}{!}{
    \begin{tabular}{lll}
    \toprule
    \textbf{Quantization scheme} & \textbf{Kernel implementations} & \textbf{Supported bitwidths (bits)} \\ 
    \midrule
    Non-uniform scalar quantization (NUQ) & Tensor Core, CUDA Core & 2.0, 3.0, 4.0, 5.0, 6.0, 7.0, 8.0 \\
    \midrule
    Vector quantization (VQ) & Tensor Core, CUDA Core  & 1.5, 2.0, 2.5, 3.0, 3.5, 4.0, 4.5, 5.0, 5.5, 6.0 \\
    \midrule
    \multirow{2}{*}{Trellis-coded quantization (TCQ)} & Tensor Core (TCQ) & 1.5, 2.0, 2.5, 3.0, 3.5, 4.0, 4.5, 5.0 \\
    % \cmidrule(lr){2-3}
    & Tensor Core (Half-TCQ) & 1.75, 2.25, 2.75, 3.25, 3.75, 4.25, 4.75 \\[2pt]
    \bottomrule
    \end{tabular}
    }
\end{table}
Q-Palette includes three quantizer families, non-uniform scalar quantization (NUQ), vector quantization (VQ), and trellis-coded quantization (TCQ), spanning a range of quantization error and inference latency trade-offs. In this section, we briefly introduce these quantization schemes. 

\begin{wrapfigure}[18]{r}{0.43\linewidth}
\vspace{-1em}
        \centering
        \resizebox{\linewidth}{!}{
		\begin{tikzpicture}
		\begin{axis}[
		% Figure size
		width=7.5cm,
		height=6.6cm,
		% Plot style
		%no marks,
		every axis plot/.append style={thick},
		% Grid
		grid=major,
		% Tick
		scaled ticks = false,
		ylabel near ticks,
		tick pos=left,
		tick label style={font=\small},
		xtick={1.5, 2.0, 2.5, 3.0, 3.5, 4.0, 4.5, 5.0},
		xticklabels={1.5, 2.0, 2.5, 3.0, 3.5, 4.0, 4.5, 5.0},
		ytick={0, 0.25, 0.5 ,0.75 ,1.0},
		yticklabels={0, 0.25, 0.5 ,0.75 ,1.0},
		% Label
		label style={font=\small},
		xlabel={Bitwidth},
		xlabel style={at={(0.5,0)}},
		ylabel={RMSE},
		ylabel style={align=center, at={(-0.1,0.5)}},
		% Range
		xmin=1.3,
		xmax=5.2,
		ymin=0,
		ymax=0.8,
        % Legend
		legend style={legend columns=1, at={(0.97, 0.97)}, font=\scriptsize, cells={align=right}},
        ]
  		\addplot[cyan, mark size=2.5pt, only marks, mark=x] table [x=bits, y=error, col sep=comma]{csvs/unif_err.csv};
		\addlegendentry{Unif}

  		\addplot[blue, mark size=2.5pt, only marks, mark=x] table [x=bits, y=error, col sep=comma]{csvs/nuq_err.csv};
		\addlegendentry{NUQ}
		\addplot[green!60!black!100, mark size=2.5pt, only marks, mark=x] table [x=bits, y=error, col sep=comma]{csvs/vq_err.csv};
		\addlegendentry{VQ}
		\addplot[red, mark size=2.5pt, only marks, mark=x] table [x=bits, y=error, col sep=comma]{csvs/tcq_err.csv};
		\addlegendentry{TCQ}
            \addplot[
      black,
      densely dotted,
      thick,
      domain=1.3:5.2,
      samples=200
    ] {2^(-x)};
    \addlegendentry{Optimal}
		\end{axis}
		\end{tikzpicture}}
        \caption{
            Gaussian quantization error of Q-Palette quantizers (NUQ, VQ, TCQ) compared to the uniform baseline.
            % Gaussian quantization error of Q-Palette quantizers vs. uniform (RTN-based) baseline.
        }
        \label{fig:quantization_error}
\end{wrapfigure}
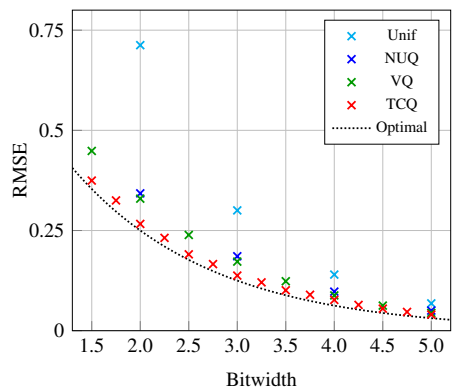

\textbf{Non-uniform scalar quantization.} NUQ quantizes each scalar weight by assigning it to an entry in a non-uniformly spaced lookup table (LUT), in contrast to uniform scalar quantizers, which use equally spaced intervals \citep{lnq}. We construct NUQ codebooks via $k$-means clustering on random Gaussian samples \citep{lloyd}, thus optimizing the codebook entries for Gaussianized weights. 
NUQ incurs low dequantization overhead and enables efficient inference \citep{flute,anyprec}.

\textbf{Vector quantization.} VQ partitions weight vectors into groups of fixed dimension, assigning each group to the nearest vector entry in a precomputed codebook \citep{gptvq}.
In Q-Palette, we specifically implement $2$D VQ, generating codebooks via $k$-means clustering on random $2$D Gaussian samples.  
Efficient kernel implementations rely on LUTs whose sizes are powers of two, enabling compact bit-level representations.
Consequently, fractional bitwidths at $0.5$-bit intervals become natural candidates for efficient implementations. For instance, a $(2^3, 2)$-shaped LUT (eight $2$D vectors) encodes two elements with $3$ bits, effectively achieving $1.5$ bits per scalar weight.
Such constructions enable VQ to support fractional bitwidths at intervals of $0.5$ bits.

\textbf{Trellis-coded quantization.} TCQ is known as a sophisticated Gaussian source quantizer, achieving near-optimal distortion performance \citep{gray1998}.
Recently, QTIP introduced optimized CUDA kernels for the bitshift variant of TCQ \citep{mao_bitshift,qtip}, which encodes each high-dimensional real-valued vector $\mathbf{v}$ into a binary bitshift representation $\mathbf{r}$:
\[
    \hat{\mathbf{v}}[i\cdot V: (i+1)\cdot V] = \mathrm{LUT}\left(\mathbf{r}[i \cdot s : i \cdot s + L]\right),
\]
where each $V$-dimensional subvector is represented by a sliding bit-window of length $L$, shifted by $s$ bits at each step. This encoding achieves an effective fractional bitwidth of $s/V$.
While previous TCQ kernel (QTIP) supported integer bitwidths (\eg, $2$, $3$, $4$ bits with shifts $s=4$, $6$, $8$ for $V=2$) and were limited to single batch, we significantly expand practical applicability by introducing fractional bitwidth support (\eg, $1.5$, $2.5$, $3.5$, $4.5$, $5.0$ bits corresponding to shifts $s=3,5,7,9,10$ for $V=2$) and optimized kernels supporting inference at batch sizes up to $8$. 
For constructing the LUT, we follow the protocol of QTIP, which is also based on $k$-means clustering of Gaussian samples.

Furthermore, we introduce \textit{half-TCQ}, a simple extension enabling quantization at intermediate fractional bitwidth intervals. 
Specifically, given a weight $W\in\reals^{d_\text{in}\times d_\text{out}}$, half-TCQ partitions the matrix row-wise and applies different bitwidth quantization to each partition. For example, to achieve $2.75$ bit quantization, half-TCQ quantizes $W[:d_\text{in}/2]$ at $2.5$ bits and the remaining half $W[d_\text{in}/2:]$ at $3$ bits. To preserve computational efficiency, we implement a dedicated CUDA kernel that performs fused dequantization and matrix multiplication for half-TCQ in a single kernel call.
As illustrated in \Cref{fig:quantization_error}, TCQ-based schemes, including half-TCQ, consistently achieve quantization error close to theoretical lower bounds, outperforming simpler quantizers. 
For a more detailed explanation of these quantizers, including quantization algorithms, please refer to \Cref{app:formaldef}.

\subsection{Implementation details for efficiency}
\label{sec:implementation_details}
\begin{table}[t!]
  \centering
  \caption{Decoding-latency speedup of quantized LLaMA 3.1-8B models
           relative to the FP16 baseline on an RTX 4090 GPU. `TC' and `CC' denote Tensor Core and CUDA Core kernels, respectively.}
  % \label{tab:throughput_comparison}
  \resizebox{0.96\linewidth}{!}{%
\begin{tabular}{cccccccccccccccc}
    %%%%%%%%%%%%%%%%%%%%%%%%%%%%
    \toprule
    \multicolumn{11}{c}{\textbf{Decoding-latency speedup compared to FP16 (batch size = 1)}}\\
    \cmidrule(lr){1-11}
    Quantizer & 2.0 & 2.25 & 2.5 & 2.75 & 3.0 & 3.25 & 3.5 & 3.75 & 4.0 & 4.25 \\
    \midrule
    NF w/ FLUTE \citep{normalfloat,flute}     &     –        & – & – & – &     –        & 2.33$\times$   & – & – &   –         & 2.20$\times$ \\% OK
    QTIP  \citep{qtip}    & 2.91$\times$ & – & – & – & 2.49$\times$ & – & – & – & 2.23$\times$& – \\
    Ours-NUQ-TC  & 3.64$\times$ & – & – & – & 2.97$\times$ & – & – & – & 2.57$\times$&– \\%ok
    Ours-NUQ-CC  & \textbf{3.70}$\times$ & – & – & – & \textbf{3.07}$\times$ & – & – & – & \textbf{2.61}$\times$&– \\%ok
    Ours-VQ-TC  & 3.63$\times$ & – & 3.24$\times$ & – & 2.97$\times$ & – & 2.69$\times$ & – & 2.57$\times$ &–  \\%ok
    Ours-VQ-CC  & 3.69$\times$ & – & \textbf{3.36}$\times$ & – & \textbf{3.07}$\times$ & – & \textbf{2.82}$\times$ & – & 2.60$\times$ &–  \\%ok
    Ours-TCQ-TC  & 3.57$\times$ & \textbf{3.23}$\times$ & 3.13$\times$ & \textbf{2.95}$\times$ & 2.96$\times$ & \textbf{2.72}$\times$ & 2.70$\times$ & \textbf{2.61}$\times$ & 2.59$\times$&\textbf{2.37}$\times$\\%ok
    %%%%%%%%%%%%%%%%%%%%%%%%%%%%
    \midrule
    \addlinespace[1.2ex]    % 원하는 만큼 여백
    \multicolumn{11}{c}{\textbf{Decoding latency speedup compared to FP16 (batch size = 8)}}\\
    \cmidrule(lr){1-11}
    Quantizer & 2.0 & 2.25 & 2.5 & 2.75 & 3.0 & 3.25 & 3.5 & 3.75 & 4.0 & 4.25 \\
    \midrule
    NF w/ FLUTE \citep{normalfloat,flute}   &     –        & – & – & – &     –        & 2.28$\times$   & – & – &   –         & 2.13$\times$ \\ % OK
    QTIP  \citep{qtip}    & 0.74$\times$ & – & – & – & 0.55$\times$ & – & – & – & 0.47$\times$ & – \\
    Ours-NUQ-TC  & \textbf{3.28}$\times$ & – & – & – & \textbf{2.78}$\times$ & – & – & – & 2.46$\times$ & –\\
    Ours-NUQ-CC  & 2.80$\times$ & – & – & – & 2.56$\times$ & – & – & – & 2.11$\times$\\
    Ours-VQ-TC  & \textbf{3.28}$\times$ & – & \textbf{2.87}$\times$ & – & 2.74$\times$ & – & 2.34$\times$ & – & \textbf{2.47}$\times$ & – \\
    Ours-VQ-CC  & 2.94$\times$ & – & 2.72$\times$ & – & 2.66$\times$ & – & 2.46$\times$ & – & 2.37$\times$ & –\\
    Ours-TCQ-TC  & 3.16$\times$ & \textbf{2.92}$\times$ & 2.80$\times$ & \textbf{2.75}$\times$ & 2.74$\times$ & \textbf{2.56}$\times$ & \textbf{2.47}$\times$ & \textbf{2.49}$\times$ & 2.46$\times$ & \textbf{2.27}$\times$ \\
    \bottomrule
\end{tabular}}
\label{tab:throughputcomparison}
\end{table}

\textbf{Reducing rotation overhead.}
Incoherence processing rotates weights along both input and output dimensions ($W\rightarrow RWR'$) with per-tensor scaling, yielding approximately standard Gaussian distributions. 
However, each rotation also requires rotating activations online during inference (\eg, $X\rightarrow XR$), incurring significant computational overhead~\citep{quip,quipsharp,qtip}.
We reduce this overhead by rotating weights only along the input dimension ($W\rightarrow RW$) and applying per-output-channel scaling, normalizing each rotated column to an approximately standard Gaussian distribution. 
Additionally, we share rotation matrices among linear layers with identical inputs (\eg, query/key/value or gate/up projections in Transformer blocks). Combining these techniques reduces the number of online rotations per Transformer block from $14$ to $4$.

\textbf{Kernel implementation.}
We implement two types of CUDA kernels optimized for different inference scenarios: (i) Tensor Core-based kernels and (ii) CUDA Core-based kernels. Our Tensor Core-based kernels, supporting TCQ, NUQ, and VQ, extend the single batch implementation from QTIP, which leverages warp-level \texttt{mma} (matrix-multiply-accumulate) instructions \citep{nvidia-tensorcore}. 
Integrating efficient dequantization logic into this framework involves non-trivial engineering efforts, including the precise mapping of quantized weights to \texttt{mma} instruction fragments. Overall, our implementation extends kernel support from integer bitwidths ($2$, $3$, $4$ bits) to fractional bitwidths ($1.5$, $2.5$, $3.5$, $4.5$, $5.0$ bits).
To further minimize overhead at larger batch sizes, we traverse each quantized weight exactly once, directly loading and performing register-level dequantization without intermediate storage. Input activations are cached in shared memory for efficient reuse across multiple weight multiplications, significantly improving inference efficiency.

Our CUDA Core-based kernels, supporting NUQ and VQ, extend the Any-Precision LLM kernels~\citep{anyprec}, originally designed for NUQ. Specifically, we replace bit-plane encoding with simpler bit-packing encoding to simplify the dequantization process.
Additionally, we incorporate Any-precision’s table lookup merging technique into both Tensor Core and CUDA Core-based NUQ kernels for bitwidths $2$, $3$, and $4$, further reducing dequantization overhead. 
\Cref{tab:qPalette_quantizers} summarizes the supported kernel implementations and bitwidths for quantization schemes in Q-Palette. 
For transparency and reproducibility, the full implementations of both kernel types are included in our code release, and additional kernel analysis is provided in \Cref{app:analonkernels}.

\Cref{tab:throughputcomparison} summarizes the end-to-end decoding-latency speedup for LLaMA 3.1 8B models quantized at various bitwidths ($2.0$-$4.25$ bits) using quantizers in Q-Palette, compared against two baselines: NormalFloat with FLUTE kernels and QTIP~\citep{normalfloat,flute,qtip}.
Our quantizers consistently deliver superior inference speed while supporting a wide range of finer-grained fractional bit quantization.
For smaller batch sizes (batch size $= 1$), CUDA Core-based kernels typically outperform Tensor Core-based kernels, whereas Tensor Core-based kernels often provide better latency at larger batch sizes (batch size $= 8$). 
Although TCQ incurs slightly higher latency overhead compared to NUQ and VQ, our TCQ quantizers still achieve significantly faster decoding speed compared to baseline quantizers, clearly demonstrating practical efficiency improvements for real-world inference workloads. 
Note that our TCQ quantizers extend QTIP and, at batch size 1, primarily differs by supporting a wider range of bitwidths and reducing the number of online Hadamard transforms, thereby lowering rotation cost and improving decoding-latency speedup, \eg, from 2.91$\times$ to 3.57$\times$ at 2-bit.

\section{Mixed-scheme quantization with Q-Palette}
\subsection{Mixed-scheme quantization under resource constraints}
\label{sec:genericmsq}

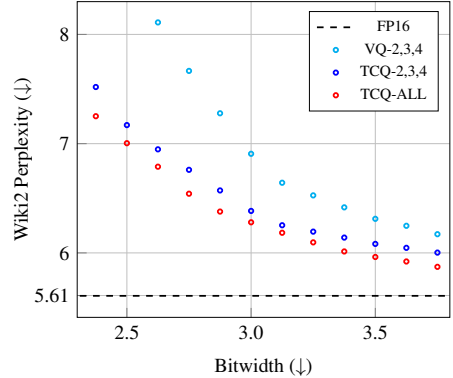
\begin{wrapfigure}[9]{r}{0.45\linewidth}
\vspace{-1.2em}
        \centering
        \resizebox{0.95\linewidth}{!}{
		\begin{tikzpicture}
		\begin{axis}[
		% Figure size
		width=7.5cm,
		height=6.6cm,
		% Plot style
		%no marks,
		every axis plot/.append style={thick},
		% Grid
		grid=major,
		% Tick
		scaled ticks = false,
		ylabel near ticks,
		tick pos=left,
		tick label style={font=\small},
		xtick={1.5, 2.0, 2.5, 3.0, 3.5, 4.0, 4.5, 5.0},
		xticklabels={1.5, 2.0, 2.5, 3.0, 3.5, 4.0, 4.5, 5.0},
		ytick={5.61, 6, 7, 8, 9},
		yticklabels={5.61, 6, 7, 8, 9},
		% Label
		label style={font=\small},
		xlabel={Bitwidth ($\downarrow$)},
		xlabel style={at={(0.5,0)}},
		ylabel={Wiki2 Perplexity ($\downarrow$)},
		ylabel style={align=center, at={(-0.1,0.5)}},
		% Range
		xmin=2.3,
		xmax=3.8,
		ymin=5.41,
		ymax=8.3,
        % Legend
		legend style={legend columns=1, at={(0.97, 0.97)}, font=\scriptsize, cells={align=right}},
        % legend cell align={right}, 
        ]
    \addplot[dashed, thick, black, domain=2.15:4.3]{5.607};
    \addlegendentry{FP16}
  		\addplot[cyan, mark size=1pt, only marks,  mark=o] table [x=bit, y=ppl, col sep=comma]{csvs/ablation_real/simple2.csv};
		\addlegendentry{VQ-2,3,4}
  		\addplot[blue, mark size=1pt, only marks, mark=o] table [x=bit, y=ppl, col sep=comma]{csvs/ablation_real/simple.csv};
		\addlegendentry{TCQ-2,3,4}
        \addplot[red, mark size=1pt,  only marks, mark=o] table [x=bit, y=ppl, col sep=comma]{csvs/ablation_real/none.csv};
		\addlegendentry{TCQ-ALL}
		\end{axis}
		\end{tikzpicture}}
        \vspace{-0.7em}
        \caption{Performance comparison of memory-constrained MSQ for different quantizer sets in Q-Palette on LLaMA 3.1-8B.
}
        \label{fig:msq_comparison}
\end{wrapfigure}

The memory-constrained MSQ formulation introduced in Section~\ref{sec:preliminaries} naturally generalizes to broader resource constraints such as inference latency. Following prior one-shot mixed-precision quantization frameworks~\citep{hawqv2,chen2021towards,higgs}, we formulate resource-constrained MSQ as MCKP:
\begin{align}
\label{eq:generic_msq}
\minimize_{P_{lq}\in\{0,1\}}~~~ & \sum_{l=1}^{L} \sum_{q=1}^{|\mathcal{Q}|} P_{lq}\cdot \ell_{lq}\\
\mathrm{subject~to} ~ &\sum_{q=1}^{|\mathcal{Q}|} P_{lq}=1,\quad\forall 1\le l\le L,\nonumber\\
&\sum_{l=1}^{L}\sum_{q=1}^{|\mathcal{Q}|} P_{lq}\cdot c_{lq}\leq C,\nonumber
\end{align}
where the loss term $\ell_{lq}$ denotes the estimated loss in performance incurred by selecting quantizer $Q_q$ for layer $l$, the cost term $c_{lq}\triangleq\mathrm{cost}(Q_q;W_l)$ represents the profiled resource cost (\eg, memory or latency), and the total resource constraint is denoted by $C$.  
The loss term $\ell_{lq}$ can be instantiated in various ways depending on available information. 
For example, in data-free settings, we approximate the loss term as $\ell_{lq} = a_l \cdot \mathrm{err}(Q_q; W_l)$, as in \Cref{eq:mpq}, where $a_l$ is a layer-wise sensitivity coefficient computed following the protocol of HIGGS~\citep{higgs}.
We estimate $\mathrm{err}(Q_q; W_l)$ using the precomputed distortion of $Q_q$, obtained by quantizing a random Gaussian matrix. 
This enables fast estimation of loss terms in data-free scenarios without fully realizing the quantization pipeline. While a dynamic programming algorithm exists for solving MCKP \citep{mckpdp}, we simply use the SCIP solver provided by Google OR-Tools for flexibility and ease of implementation \citep{ortools}.
Please refer to \Cref{app:MSQ_details} for additional details on loss term computation and cost profiling.

We illustrate the effectiveness of Q-Palette in \Cref{fig:msq_comparison} by comparing memory-constrained MSQ results across different quantizer subsets within Q-Palette. 
Specifically, \emph{TCQ-ALL} includes all TCQ quantizers available in Q-Palette, while \emph{TCQ-2,3,4} and \emph{VQ-2,3,4} reflect integer-bitwidth quantizers supported in QTIP and HIGGS, respectively~\citep{qtip,higgs}. 
Although QTIP is originally a single-scheme baseline, our constructed integer-bitwidth subset \emph{TCQ-2,3,4} serves as a reasonable reference to evaluate the benefit of broadened quantizer support. 
The results clearly highlight the advantage of \emph{TCQ-ALL}, demonstrating that the broadened quantizer support provided by Q-Palette consistently yields superior performance.

\subsection{Fusion-aware mixed-scheme quantization}
\label{sec:fusion}
\emph{Layer fusion} is a widely used optimization for accelerating inference speed of DNN models \citep{taso,ios}.  
Within each Transformer block, certain linear layers, such as \{query, key, value\} projections or \{up, gate\} projections, share the same input and can be fused into a single linear layer. For example, instead of separately computing $ XW_q$, $XW_k$, and $XW_v$, we can concatenate the weight matrices and compute $X(W_q \oplus W_k \oplus W_v) $, followed by splitting the output (see the right side of \Cref{fig:figure1}). 
Layer fusion can reduce the number of kernel launches and memory accesses, thereby providing further opportunities for inference speedup.

We propose \emph{fusion-aware MSQ}, a novel MSQ framework that jointly optimizes quantization with the additional design dimension of layer fusion. 
Fusion-aware MSQ simultaneously determines 1) how to group layers for fusion and 2) which quantizer to assign to each fused group.
Whereas standard MSQ (\Cref{eq:generic_msq}) introduces one binary decision variable per (layer, quantizer) pair, fusion-aware MSQ instead defines one binary variable per (fusible layer group, quantizer) pair.
Here, a fusible layer group is a set of layers sharing the same input. For generic Transformer models, we can write the set of all fusible layer groups for each Transformer block $b$ as
\[
\mathcal{G}_b = \bigl\{\{q_b\}, \{k_b\}, \{v_b\}, \{q_b,k_b\}, \{q_b,v_b\}, \{k_b,v_b\}, \{q_b,k_b,v_b\}, \{o_b\}, \{u_b\}, \{g_b\}, \{u_b,g_b\}, \{d_b\}\bigr\}.
\]
The overall set of fusible layer groups is $\mathcal{G} = \bigcup_{b=1}^B \mathcal{G}_b$ where $B$ is the number of Transformer blocks. 
For each $g\in\mathcal{G}$ and quantizer $Q_q\in\mathcal{Q}$, we introduce a binary variable $P_{gq}\in\{0,1\}$ indicating that all layers in $g$ are fused and quantized by $Q_q$.

The fusion-aware MSQ problem is formulated as:
\begin{align}
\minimize_{P_{gq}\in\{0,1\}}~~~ &~~~\sum_{g\in\mathcal{G}}\sum_{q=1}^{|\mathcal{Q}|}P_{gq}\cdot \sum_{l\in g} \ell_{lq}\label{eq:fusion_aware_msq}\\
\label{eq:const1}
\mathrm{subject~to}~ &\sum_{g\in\mathcal{G}:l\in g} \sum_{q=1}^{|\mathcal{Q}|}P_{gq}=1, \quad\forall l\in\bigcup_{b=1}^B\{q_b, k_b, v_b, o_b, u_b,g_b,d_b\},\tag{C1}\\
\label{eq:const2}
&~~~\sum_{g\in\mathcal{G}}\sum_{q=1}^{|\mathcal{Q}|} P_{gq}\cdot c_{gq}\le C,\tag{C2}
\end{align}
where
$\ell_{lq}$ is the loss term, $c_{gq}$ represents the profiled cost (\eg, latency) of the fused layer corresponding to the group $g$ quantized by $Q_q$.  

To ensure valid solutions, two constraints are imposed. \emph{Exclusive assignment} (C1): every layer must belong to exactly one active (group, quantizer) pair; among all fusible groups $g$ that contain a given layer $l$, only one associated variable $P_{gq}$ can be 1. \emph{Resource constraint} (C2): the total profiled cost of all activated groups must not exceed the resource budget $C$.

This formulation explicitly captures latency improvements enabled by fusion, thus providing improved accuracy-latency trade-offs compared to the MSQ formulation that neglects layer fusion (see \Cref{fig:figure1}).  
Since both the objective and constraints are linear in $P_{gq}$, \Cref{eq:fusion_aware_msq} is also an ILP. 
We solve this ILP using the SCIP solver in OR-Tools \citep{ortools}.
Note that, compared to non-fusion-aware MSQ (\Cref{eq:generic_msq}), fusion-aware MSQ introduces $1.71\times$ more decision variables while maintaining the same number of constraints.

\begin{table}[t]
    \centering
    \caption{Data-free quantization results on LLaMA 3 models for various bitwidths.}
    \resizebox{0.97\textwidth}{!}{
    \begin{tabular}{llcccccccccc}
    \toprule
    & & \multicolumn{3}{c}{LLaMA 3.1-8B} 
     & \multicolumn{3}{c}{LLaMA 3.2-1B}
     & \multicolumn{3}{c}{LLaMA 3.2-3B} &\\
     \cmidrule(r){3-5} \cmidrule(lr){6-8} \cmidrule(lr){9-11}
        &Method & Bits ($\downarrow$) & Wiki2 ($\downarrow$) & Acc  ($\uparrow$)& Bits ($\downarrow$) & Wiki2 ($\downarrow$) & Acc  ($\uparrow$) & Bits ($\downarrow$) & Wiki2 ($\downarrow$) & Acc ($\uparrow$)  \\
       \midrule
       &FP16 & 16.00 & 5.61 & 69.3 & 16.00 & 8.64 & 55.9 & 16.00 & 6.98 & 63.7\\
     \midrule
    &Data-free QTIP & 3.00 & 6.81 & \textbf{66.9}  &3.00 & 13.35 & 49.0 & 3.00 & 8.89 & 58.2\\
     &Ours-TCQ-3 & 3.00 & \textbf{6.78} & 66.0  &3.00 & \textbf{12.59} & \textbf{50.7} & 3.00 & \textbf{8.67} & \textbf{60.3}\\
     \cmidrule(lr){2-2} \cmidrule(lr){3-3} \cmidrule(lr){4-5} \cmidrule(lr){6-6}  \cmidrule(lr){7-8} \cmidrule(lr){9-9}  \cmidrule(lr){10-11}
     &Ours-MSQ-Mem & 3.00 & \textbf{6.28} & \textbf{67.5}  &3.00 &\textbf{10.51} & \textbf{53.2} & 3.00 & 
     \textbf{7.81} & \textbf{61.7}\\
     \midrule
       &NF & 3.25 & 7.70 & 64.3     & 3.25 & 17.73  & 46.8 & 3.25 & 10.06 & 59.3\\
       &HQQ & 3.25 & 8.29 & 63.2 & 3.25 & 26.42 & 42.9 & 3.25 & 11.68 & 54.2\\
      & HIGGS & 3.25 & 6.64 & 66.4 & 3.25 & 12.19 & 51.1 & 3.25 & 8.67 & 60.1\\
      & Ours-TCQ-3.25 & 3.25 & \textbf{6.48} & \textbf{66.4}       & 3.25 &  \textbf{11.30}     & \textbf{51.8}  & 3.25 & \textbf{8.15}      & \textbf{61.0}\\
     \cmidrule(lr){2-2} \cmidrule(lr){3-3} \cmidrule(lr){4-5} \cmidrule(lr){6-6}  \cmidrule(lr){7-8} \cmidrule(lr){9-9}  \cmidrule(lr){10-11}
    &   HIGGS-MSQ & 3.25 & 6.39 & 66.7 & 3.25 & 11.08 & 52.5 & 3.25 & 8.01 & 61.1\\
     &  Ours-MSQ-Mem& 3.25 & \textbf{6.10} & \textbf{67.6} & 3.25 & \textbf{10.00} & \textbf{53.7} & 3.25& \textbf{7.60} & \textbf{61.9}\\
       \midrule
    &   NF & 4.02 &6.22  & 67.8 & 4.02 & 10.70  & 52.9 & 4.02 & 7.82 & 62.1\\
     &  HQQ & 4.02 & 6.52 & 67.5 & 4.02 & 13.47 & 51.4 & 4.02 & 8.67 & 60.2\\
     &  HIGGS & 4.02 & 5.98 & \textbf{68.7} & 4.02 & 9.64 & 53.6 & 4.02 & 7.46 & 62.3\\
     &Data-free  QTIP & 4.00 & 5.94 & 68.4 & 4.00 & 9.53 & \textbf{54.9} & 4.00 & 7.41 & 62.8  \\
      & Ours-TCQ-4 & 4.00 & \textbf{5.92} & 68.2    & 4.00   &\textbf{9.45}   & 54.3  & 4.00 & \textbf{7.37}   & \textbf{63.4}\\
     \cmidrule(lr){2-2} \cmidrule(lr){3-3} \cmidrule(lr){4-5} \cmidrule(lr){6-6}  \cmidrule(lr){7-8} \cmidrule(lr){9-9}  \cmidrule(lr){10-11}
     &  HIGGS-MSQ & 4.00 & 5.91 & 68.3 & 4.00 & 9.52 & 55.0 & 4.00 & 7.40 & 62.2\\
     &  Ours-MSQ-Mem & 4.00 & \textbf{5.81} & \textbf{69.0} & 4.00 & \textbf{9.14} & \textbf{55.2} & 4.00 &\textbf{7.22} & \textbf{63.2}\\
        \bottomrule
    \end{tabular}}
    \label{tab:dfquant}
\end{table}

\section{Experiments}
We evaluate the quantization performance of our methods against baselines on the LLaMA 3 series (LLaMA 3.1-8B, 70B, 3.2-1B, 3B),  LLaMA 2 series (LLaMA 2-7B, 13B), and Qwen 2.5-7B \citep{llama3,llama2,qwen25}. 
For the data-free scenario, we consider single-scheme quantization baselines—HQQ (uniform), NormalFloat (NUQ), HIGGS-Single (VQ), and data-free QTIP (TCQ)—as well as the MSQ baseline HIGGS-MSQ~\citep{hqq,normalfloat,qtip,higgs}. For the data-aware scenario, we use QTIP as the baseline.
For all experiments, both our methods and baselines are evaluated strictly in the PTQ setting, without any retraining.
Performance is measured primarily via WikiText2 perplexity and average zero-shot accuracy across ARC-easy, ARC-challenge, HellaSwag, PiQA, and WinoGrande \citep{wikitext2,arc,hellaswag,piqa,winogrande}. 
For latency evaluations, we use Gemlite kernels for HQQ at higher bitwidths ($4.02$, $4.25$ bits), and FLUTE kernels for NormalFloat ($3.25$, $4.25$ bits) and HQQ at $3.25$ bits~\citep{gemlite,flute}. Because QTIP supports only single batch inference, we simulate larger batch sizes by repeated kernel invocation.

We denote \textit{Ours-TCQ-x} as single-scheme quantization using TCQ-$x$ from Q-Palette, \textit{Ours-MSQ-Mem} as memory-constrained MSQ (\Cref{sec:genericmsq}) using Q-Palette's TCQ quantizers, and \textit{Ours-MSQ-Lat} as latency-constrained fusion-aware MSQ (\Cref{sec:fusion}) using all Q-Palette quantizers with Tensor-Core kernels.
Please refer to \Cref{app:additional} for additional results on other models, ablation studies, and detailed experimental settings.

\begin{figure}[t]
    \centering
    \begin{subfigure}[b]{0.32\linewidth}
        \centering
        \resizebox{\linewidth}{!}{
		\begin{tikzpicture}
		\begin{axis}[
		% Figure size
		width=7.5cm,
		height=6.6cm,
		% Plot style
		%no marks,
		every axis plot/.append style={thick},
		% Grid
		grid=major,
		% Tick
		scaled ticks = false,
		ylabel near ticks,
		tick pos=left,
		tick label style={font=\small},
		xtick={1.5, 2.0, 2.5, 3.0, 3.5, 4.0, 4.5, 5.0},
		xticklabels={1.5, 2.0, 2.5, 3.0, 3.5, 4.0, 4.5, 5.0},
		ytick={5.61, 6, 7, 8},
		yticklabels={5.61, 6, 7, 8},
		% Label
		label style={font=\small},
		xlabel={Bitwidth ($\downarrow$)},
		xlabel style={at={(0.5,0)}},
		ylabel={Wiki2 Perplexity ($\downarrow$)},
		ylabel style={align=center, at={(-0.05,0.5)}},
		% Range
		xmin=2.2,
		xmax=4.3,
		ymin=5.35,
		ymax=8.99,
        % Legend
		legend style={legend columns=1, at={(0.99, 0.99)}, font=\scriptsize, cells={align=right}},
        % legend cell align={right}, 
        ]
    \addplot[dashed, thick, black, domain=2.15:4.3]{5.61};
    \addlegendentry{FP16}
  		\addplot[cyan, mark size=2pt, only marks, mark=x] table [x=bits, y=ppl, col sep=comma]{csvs/fig3a/HQQ.csv};
		\addlegendentry{HQQ}
  		\addplot[gray, mark size=2pt, only marks, mark=x] table [x=bits, y=ppl, col sep=comma]{csvs/fig3a/NF.csv};
		\addlegendentry{NormalFloat}
        \addplot[green!60!black!100, mark size=2pt, only marks, mark=x] table [x=bits, y=ppl, col sep=comma]{csvs/fig3a/qtip.csv};
		\addlegendentry{Data-free QTIP}
		\addplot[blue, mark size=2pt, only marks, mark=x] table [x=bits, y=ppl, col sep=comma]{csvs/fig3a/HIGGS.csv};
		\addlegendentry{HIGGS-Single}
		\addplot[red, mark size=2pt, only marks, mark=x] table [x=bits, y=ppl, col sep=comma]{csvs/fig3a/Ours_full.csv};
		\addlegendentry{Ours-TCQ-x}
		\addplot[blue, mark size=1.5pt, only marks, mark=o] table [x=bits, y=ppl, col sep=comma]{csvs/fig3a/HIGGS-M.csv};
        \addlegendentry{HIGGS-MSQ}
		\addplot[red, mark size=1.5pt, only marks, mark=o] table [x=bits, y=ppl, col sep=comma]{csvs/fig3bc/memc_only.csv};
		\addlegendentry{Ours-MSQ-Mem}
		\end{axis}
		\end{tikzpicture}}
        \caption{Memory}
        \label{fig:sub1}
    \end{subfigure}
    \hfill
    \begin{subfigure}[b]{0.32\linewidth}
        \centering
        \resizebox{\linewidth}{!}{
		\begin{tikzpicture}
		\begin{axis}[
		% Figure size
		width=7.5cm,
		height=6.6cm,
		% Plot style
		%no marks,
		every axis plot/.append style={thick},
		% Grid
		grid=major,
		% Tick
		scaled ticks = false,
		ylabel near ticks,
		tick pos=left,
		tick label style={font=\small},
		xtick={50, 100, 150, 200, 250},
		xticklabels={50, 100, 150, 200, 250},
		ytick={6, 7, 8},
		yticklabels={6, 7, 8},
		% Label
		label style={font=\small},
		xlabel={Decoding Throughput (Toks/sec) ($\uparrow$)},
		xlabel style={at={(0.5,0)}},
		ylabel={Wiki2 Perplexity ($\downarrow$)},
		ylabel style={align=center, at={(-0.05,0.5)}},
		% Range
		xmin=45,
		xmax=270,
		ymin=5.35,
		ymax=8.99,
        % Legend
		legend style={legend columns=1, at={(0.37, 0.97)}, font=\scriptsize, cells={align=right}},
        % legend cell align={right}, 
        ]
        % \addplot[mark=*, mark size=1.5pt, color=black] coordinates {(145.692606114205, 8.199)} node[right, font=\scriptsize]{HQQ};
		\addplot[black, mark size=2pt, only marks, mark=+] table [x=thp1, y=perp, col sep=comma]{csvs/fig3bc/FP16.csv};
		\addlegendentry{FP16}
		\addplot[cyan, mark size=2pt, only marks, mark=x] table [x=thp1, y=perp, col sep=comma]{csvs/fig3bc/HQQ.csv};
		\addlegendentry{HQQ}
		\addplot[gray, mark size=2pt, only marks, mark=x] table [x=thp1, y=perp, col sep=comma]{csvs/fig3bc/NF.csv};
		\addlegendentry{NormalFloat}
		\addplot[green!60!black!100, mark size=2pt, only marks, mark=x] table [x=thp1, y=perp, col sep=comma]{csvs/fig3bc/qtip.csv};
		\addlegendentry{Data-free QTIP}
		\addplot[red, mark size=2pt, only marks, mark=x] table [x=thp1, y=perp, col sep=comma]{csvs/fig3bc/Ours_full.csv};
		\addlegendentry{Ours-TCQ-x}
		\addplot[red, mark size=1pt, only marks, mark=triangle] table [x=thp1, y=perp, col sep=comma]{csvs/fig3bc/full_fuse.csv};
		\addlegendentry{Ours-MSQ-Lat}
		\end{axis}
		\end{tikzpicture}}
        \caption{Throughput (b.s.$=1$)}
        \label{fig:sub2}
    \end{subfigure}
    \hfill
    \begin{subfigure}[b]{0.32\linewidth}
        \centering
        \resizebox{\linewidth}{!}{
		\begin{tikzpicture}
		\begin{axis}[
		% Figure size
		width=7.5cm,
		height=6.6cm,
		% Plot style
		%no marks,
		every axis plot/.append style={thick},
		% Grid
		grid=major,
		% Tick
		scaled ticks = false,
		ylabel near ticks,
		tick pos=left,
		tick label style={font=\small},
		xtick={0, 500, 1000, 1500},
		xticklabels={0, 500, 1000, 1500},
		ytick={6, 7, 8},
		yticklabels={6, 7, 8},
		% Label
		label style={font=\small},
		xlabel={Decoding Throughput (Toks/sec) ($\uparrow$)},
		xlabel style={at={(0.5,0)}},
		ylabel={Wiki2 Perplexity ($\downarrow$)},
		ylabel style={align=center, at={(-0.05,0.5)}},
		% Range
		xmin=150,
		xmax=2000,
		ymin=5.35,
		ymax=8.99,
        % Legend
		legend style={legend columns=1, at={(0.37, 0.97)}, font=\scriptsize, cells={align=right}},
        % legend cell align={right}, 
        ]

		\addplot[black, mark size=2pt, only marks, mark=+] table [x=thp8, y=perp, col sep=comma]{csvs/fig3bc/FP16.csv};
		\addlegendentry{FP16}
		\addplot[cyan, mark size=2pt, only marks, mark=x] table [x=thp8, y=perp, col sep=comma]{csvs/fig3bc/HQQ.csv};
		\addlegendentry{HQQ}
		\addplot[gray, mark size=2pt, only marks, mark=x] table [x=thp8, y=perp, col sep=comma]{csvs/fig3bc/NF.csv};
		\addlegendentry{NormalFloat}
		\addplot[green!60!black!100, mark size=2pt, only marks, mark=x] table [x=thp8, y=perp, col sep=comma]{csvs/fig3bc/qtip.csv};
		\addlegendentry{Data-free QTIP}
		\addplot[red, mark size=2pt, only marks, mark=x] table [x=thp8, y=perp, col sep=comma]{csvs/fig3bc/Ours_full.csv};
		\addlegendentry{Ours-TCQ-x}
		\addplot[red, mark size=1pt, only marks, mark=triangle] table [x=thp8, y=perp, col sep=comma]{csvs/fig3bc/full_fuse.csv};
		\addlegendentry{Ours-MSQ-Lat}
		\end{axis}
		\end{tikzpicture}}
        \caption{Throughput (b.s.$=8$)}
        \label{fig:sub3}
    \end{subfigure}
    \vspace{-0.4em}
    \caption{Performance trade-offs of quantized LLaMA 3.1-8B models under different constraints in the data-free setting on an RTX 4090 GPU:
(a) memory constraint;
(b) latency constraint (single batch);
(c) throughput evaluation (batch size $= 8$) of the quantized models in (b). }\label{fig:quantization_error_combined}
\end{figure}
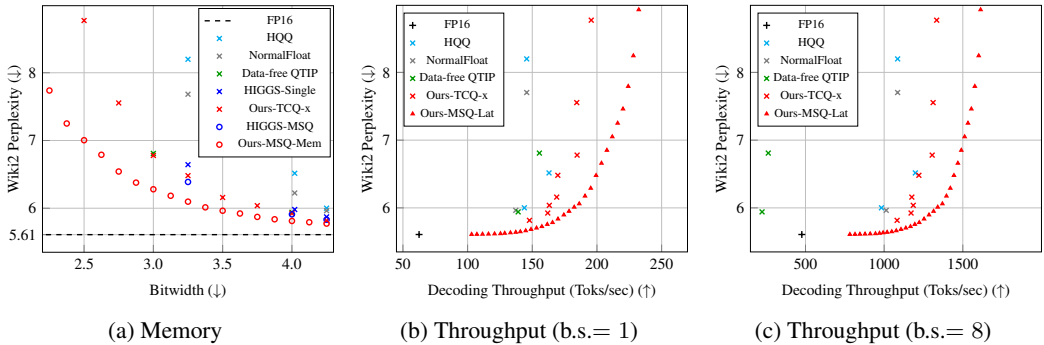

\subsection{Data-free quantization results}

\Cref{tab:dfquant} summarizes data-free quantization performance on LLaMA 3 models.
\textit{Ours-TCQ-x} consistently outperforms all single-scheme baselines in WikiText2 perplexity and achieves competitive zero-shot accuracy. Notably, \textit{Ours-MSQ-Mem} surpasses all baseline methods, clearly demonstrating the effectiveness of Q-Palette. 
\Cref{fig:sub1} evaluates \textit{Ours-MSQ-Mem} across a broader memory range ($2.25$-$4.25$ bits). Our method achieves Pareto-dominant performance, significantly outperforming baseline methods. Remarkably, our $2.875$-bit model achieves comparable WikiText2 perplexity to the $3.25$-bit HIGGS-MSQ model, resulting in a $1.13\times$ higher compression ratio and superior perplexity. 
\Cref{fig:sub2,fig:sub3} compare the throughput-perplexity trade-offs of \textit{Ours-MSQ-Lat} and \textit{Ours-TCQ-x} against baseline methods. 
 Both variants achieve significant throughput improvements over baseline methods, substantially expanding the Pareto frontier in both batch sizes $1$ and $8$. 

\subsection{Data-aware quantization results}

We further evaluate our methods in the data-aware setting by comparing our \textit{Ours-MSQ-Mem} approach against the state-of-the-art QTIP baseline (without retraining) on the LLaMA 2-7B and 13B models. For this setting, we utilize the same proxy Hessian used in QTIP during the quantization and compute the loss term $\ell_{lq}$ for our MSQ as the actual validation perplexity degradation induced by quantizing the weight $W_l$ using the quantizer $Q_q$. 
As summarized in \Cref{tab:daquant}, our method consistently achieves superior perplexity and zero-shot accuracy compared to the baseline. Additionally, our optimized kernels achieve over $4\times$ throughput improvements at batch size $8$ for both LLaMA 2 models at both $2$ and $3$ bits, demonstrating the practical benefits of our optimized kernel for batch size $8$.

\begin{table}[t]
    \centering
    \caption{Data-aware quantization results on LLaMA 2 models (throughput on an RTX4090 GPU).}
    \resizebox{0.962\textwidth}{!}{
    \begin{tabular}{lccccccccccccc}
    \toprule
     & \multicolumn{5}{c}{LLaMA 2 7B} 
     & \multicolumn{5}{c}{LLaMA 2 13B} \\
         \cmidrule(lr){2-6} \cmidrule(lr){7-11}
        &  & &  & \multicolumn{2}{c}{Throughput (Toks/s)} & & &  & \multicolumn{2}{c}{Throughput (Toks/s)} \\
        \cmidrule(lr){5-6}\cmidrule(lr){10-11}
        Method& Bits & Wiki2 ($\downarrow$) & Acc ($\uparrow$) & $B=1$& $B=8$ &  Bits& Wiki2 ($\downarrow$) & Acc ($\uparrow$) & $B=1$ & $B=8$ \\
         \cmidrule(lr){1-1}\cmidrule(lr){2-6} \cmidrule(lr){7-11}
        FP16 & 16.00 & 5.12 &64.9& 71& 527& 16.00 & 4.57 &67.9&OOM & OOM \\ 
         \cmidrule(lr){1-1}\cmidrule(lr){2-6} \cmidrule(lr){7-11}
        QTIP & 2.00 & 6.84 &58.9& 209& 386& 2.00 & 5.62 &63.6&131 & 154 \\
        Ours-MSQ-Mem & 2.00 & \textbf{6.47} &\textbf{60.3}  &\textbf{272} &\textbf{1684} & 2.00 & \textbf{5.35} & \textbf{64.2}  & \textbf{152} & \textbf{928}\\
         \cmidrule(lr){1-1}\cmidrule(lr){2-6} \cmidrule(lr){7-11}
        QTIP & 3.00 & 5.39 &63.3& 184&304& 3.00 & 4.76 & \textbf{67.0}&110 & 153 \\
        Ours-MSQ-Mem & 3.00 & \textbf{5.34} & \textbf{63.9} & \textbf{224}& \textbf{1489}& 3.00 & \textbf{4.74} & \textbf{67.0} & \textbf{126 }& \textbf{738}\\
        \bottomrule
    \end{tabular}}
    \label{tab:daquant}
\end{table}

\section{Related works}

\textbf{Incoherence processing.}
Previous methods for handling outliers in LLM quantization have primarily relied on heuristic techniques~\citep{squeezellm,smoothquant,oaq}.
Recently, a theoretically grounded approach, incoherence processing, has been introduced to systematically address weight irregularities~\citep{quip}.
Incoherence processing applies rotation matrices to weight matrices prior to quantization, significantly suppressing outliers and transforming distributions into approximately Gaussian forms~\citep{quip,quarot,flashattention3}.
This Gaussianization enables the use of sophisticated Gaussian quantizers such as lattice vector quantization~\citep{quipsharp} and trellis-coded quantization~\citep{qtip}.
However, current implementations support efficient kernels only for limited integer bitwidths and small batch sizes, constraining their practicality, a limitation that our proposed Q-Palette directly addresses by introducing fractional-bit quantizers and optimized CUDA kernels with broader batch size support. 

More recent rotation-based approaches further enhance quantization performance by applying learned matrix transforms such as scaling or affine transformations~\citep{spinquant,duquant,ostquant,flatquant}. However, these methods mainly target weight-activation quantization and require calibration data to learn the transforms, whereas our work focuses on weight-only PTQ, which remains applicable even in data-free settings and is particularly suited for memory-bound, small-batch inference.

\textbf{Other weight-only post-training quantization methods.}
Several simpler PTQ methods prioritize computational and implementation efficiency.
HQQ employs data-free uniform quantization via half-quadratic optimization~\citep{hqq}.
NormalFloat constructs lookup tables for non-uniform scalar quantization using Gaussian quantiles~\citep{normalfloat}.
FLUTE offers state-of-the-art CUDA kernels for LUT-based non-uniform quantizers with per-group scaling~\citep{flute}.
Despite their efficiency, these approaches generally incur higher quantization errors compared to sophisticated quantizers such as TCQ.

\textbf{Mixed-precision and mixed-scheme quantization.}
Mixed-precision quantization (MPQ) optimizes layer-wise bit allocation under given constraints~\citep{1stmpq}.
For vision models, HAQ and HAWQ-V2 introduced surrogate objectives based on second-order information for  MPQ~\citep{hawq,hawqv2}.
Chen et al.\ generalized these approaches by explicitly incorporating diverse resource constraints, such as latency, and formulated the problem as an MCKP~\citep{chen2021towards}.
Recently, HIGGS introduced the linearity theorem, a data-free linear surrogate specifically tailored for LLM quantization~\citep{higgs}.
Building upon these works and drawing insights from compiler optimization research~\citep{taso,ios}, we propose a novel fusion-aware mixed-scheme quantization framework that jointly optimizes quantizer selection and layer fusion decisions, achieving superior accuracy-latency trade-offs.

\section{Conclusion}
In this paper, we have investigated weight-only PTQ as a solution for compressing LLMs, particularly beneficial for memory-bound inference tasks with small batch sizes.
Considering that irregular weight distributions in LLMs have complicated quantization, we leveraged recent rotation-based methods that Gaussianize weight distributions, enabling a theoretical analysis of optimal bit allocation.
Based on this perspective, we derived an information-theoretically optimal bit allocation strategy under fixed bit budgets, demonstrating that fine-grained fractional-bit quantizers closely approaching the Gaussian distortion-rate bound are essential for achieving near-optimal quantization efficiency.
To translate this theoretical finding into practical benefits, we introduced Q-Palette, a versatile suite of fractional-bit quantizers, from sophisticated trellis-coded quantization schemes offering near-optimal distortion to simpler vector and scalar quantizers optimized for fast inference, each efficiently implemented with optimized CUDA kernels across a wide range of bitwidths.
We further integrated Q-Palette into an MSQ framework, proposing a novel fusion-aware MSQ approach that jointly optimizes quantizer selection and layer fusion decisions under given resource constraints, effectively improving inference latency.
Experimental evaluations validated that our MSQ framework with Q-Palette and fusion-aware optimization consistently outperforms existing baseline methods, achieving superior accuracy-memory and accuracy-latency trade-offs on LLaMA 2 and LLaMA 3 models.

\section*{Impact statement}
\label{app:societal_impacts}

Q-Palette introduces a versatile set of quantizers with broad fractional-bitwidth support, which can serve as a foundational building block for evaluating and developing MSQ algorithms.
Q-Palette’s quantizers are usable in data-free scenarios, offering off-the-shelf applicability like NormalFloat and HQQ \citep{normalfloat,flute,hqq}, which lowers the barrier for practitioners lacking calibration data.
Importantly, Q-Palette supports a wide spectrum of performance-efficiency trade-offs, enabling practitioners to select quantization configurations that best match their specific deployment workloads. 
This adaptability is valuable for real-world applications where resource constraints and performance requirements vary significantly.
Moreover, our results challenge the misconception that sophisticated quantizers such as TCQ are computationally prohibitive for practical use beyond batch size 1 \citep{higgs}. 
We demonstrate that TCQ achieves efficient decoding speeds for batch sizes up to 8, making it practically suitable for edge-device workloads.
By correcting this misunderstanding, our work may encourage further investigation into TCQ and other quantizers previously considered computationally expensive.
% Our results challenge the view that sophisticated quantizers like TCQ are impractical beyond batch size 1 \citep{higgs}, showing efficient decoding for batch sizes up to 8 and suitability for edge workloads.
% In a broader context, Q-Palette's memory- and energy-efficient quantization contributes to reducing the carbon footprint of AI inference workloads. 
% By enabling efficient LLM deployment directly on users' personal edge devices rather than centralized cloud infrastructure, our approach inherently enhances privacy and security, which are critical considerations in modern AI deployments.

\section*{Acknowledgements and Disclosure of Funding}
We would like to thank Jinuk Kim for insightful discussions and helpful feedback on this work. 
This work was supported by Samsung Electronics Co., Ltd. (IO250418-12669-01), Mobile eXperience (MX) Business, Samsung Electronics Co., Ltd., Institute of Information \& Communications Technology Planning \& Evaluation (IITP) grant funded by the Korea government (MSIT) [No. RS-2020-II200882, (SW STAR LAB) Development of deployable learning intelligence via self-sustainable and trustworthy machine learning, No. RS-2021-II211343, Artificial Intelligence Graduate School Program (Seoul National University)], and the National Research Foundation of Korea (NRF) grant funded by the Korea government (MSIT) (No. RS-2024-00354036). Hyun Oh Song is the corresponding author.

\medskip

\bibliography{neurips.bib}

%%%%%%%%%%%%%%%%%%%%%%%%%%%%%%%%%%%%%%%%%%%%%%%%%%%%%%%%%%%%
\newpage
\appendix
\section{Optimal bitwidth proof}
\label{app:thm1proof}

In this section, we formally derive the optimal bit allocation result stated in the main paper. Under the assumption that weight matrices have been Gaussianized through incoherence processing, the quantization problem can be viewed as a Gaussian source coding problem. We recall the memory-constrained mixed-scheme quantization (MSQ) formulation as:
\begin{align}
\label{eq:appmpq}
\tag{\ref{eq:mpq}}
\minimize_{P_{lq}\in\{0,1\}}~~&~~ \sum_{l=1}^L a_l\left(\sum_{q=1}^{|\mathcal{Q}|} P_{lq} \cdot \mathrm{err}(Q_q; W_l)\right)\\
\mathrm{subject~to}&~~ \sum_{q=1}^{|\mathcal{Q}|} P_{lq} = 1,\quad \forall 1\le l \le L,\nonumber\\
&~~ \sum_{l=1}^L\sum_{q=1}^{|\mathcal{Q}|} P_{lq} \cdot \mathrm{bit}(Q_q;W_l) d_l^\text{in} d_l^\text{out}  \leq M,  \nonumber
\end{align}
where $a_l$ is the empirically estimated sensitivity coefficient for layer $l$, $\mathrm{err}(Q; W_l)$ is the normalized quantization error of layer $l$, $Q_q$ denotes a candidate quantizer, $\mathrm{bit}(Q_q;W_l)$ is the average number of bits per weight component for the weight matrix $W_l$ quantized by $Q_q$, $P_{lq} \in \{0,1\}$ is a binary indicator selecting quantizer $Q_q$ for layer $l$, and $M$ denotes the total memory budget (in bits) allocated for quantized model \citep{mckp,higgs,chen2021towards}.

We recall that classical rate-distortion theory provides a fundamental lower bound on the expected quantization error for Gaussian sources: $\mathbb{E}[\mathrm{err}(Q)] \geq 2^{-2\mathrm{bit}(Q)}$ \citep{ratedist}. Further assuming we have access to ideal Gaussian quantizers capable of exactly achieving this theoretical distortion bound $\mathbb{E}[\mathrm{err}(Q)] = 2^{-2\mathrm{bit}(Q)}$ at any fractional bitwidth $b_l \ge \eta$, the memory-constrained MSQ (problem \eqref{eq:appmpq}) can again be written as the continuous optimization problem:
\begin{align}
\tag{\ref{eq:mpq_frac}}
    \minimize_{b_l\ge \eta}~~ &~~ \sum_{l=1}^L a_l 2^{-2 b_l}\label{eq:supp_mpq_frac}\\
    \mathrm{subject~to} &~~ \sum_{l=1}^L b_l d_l^{\mathrm{in}} d_l^{\mathrm{out}} \leq M,\nonumber
\end{align}
where $b_l$ is the fractional bitwidth allocated to layer $l$, and $\eta>0$ is a minimum bitwidth threshold introduced to avoid degenerate cases such as assigning $0$-bit to a layer. Here, we replace the actual quantization error $\mathrm{err}(Q)$ with its expectation $\mathbb{E}[\mathrm{err}(Q)]$. We empirically justify this replacement by demonstrating extremely low variance in quantization errors for typical weight matrix dimensions encountered in LLMs (see \Cref{tab:empirical_distortion}). 
Additionally, we assume that the sensitivity coefficients $a_l$ are non-negative ($a_l\ge 0$), a reasonable assumption given that pretrained weights typically represent local optima.
Given this simplified optimization problem, we now derive the closed-form solution for the optimal fractional bit allocation.
\optfrac*
% \begin{theorem}[Optimal bit allocation with ideal Gaussian quantizers]
% \label{thm:supp_optimal_fractional}
% The optimal fractional bit allocation $\{b_l^*\}$ for problem \eqref{eq:supp_mpq_frac} is given by
% \[
%     b_l^* = \max\left\{0,\frac{1}{2\ln(2)}\left(\ln\frac{a_l}{d_l^{\mathrm{in}} d_l^{\mathrm{out}}}\right) + C\right\},\quad\forall\,1\le l\le L,
% \]
% for the constant $C$ that satisfies the memory constraint $\sum_l b_l^* d_l^{\mathrm{in}} d_l^{\mathrm{out}} = M$.
% \end{theorem}

\begin{proof}
Let's start by formulating the Lagrangian function for problem \eqref{eq:supp_mpq_frac}, explicitly including the constraint $b_l \ge \eta$ via Lagrange multipliers $\mu_l \geq 0$ and the budget constraint via $\lambda\ge0$:
\begin{align}
    \mathcal{L}(\{b_l\}, \lambda, \{\mu_l\}) 
    \coloneqq \sum_{l=1}^L a_l 2^{-2 b_l} + \lambda\left(\sum_{l=1}^L b_l d_l^{\mathrm{in}} d_l^{\mathrm{out}} - M\right) - \sum_{l=1}^L \mu_l (b_l-\eta). \nonumber
\end{align}

By differentiating the Lagrangian with respect to $b_l$ and setting it equal to zero to find the stationary points, we have:
\begin{align}
    \frac{\partial \mathcal{L}}{\partial b_l} 
    &= -2 \ln(2)\, a_l 2^{-2 b_l} + \lambda d_l^{\mathrm{in}} d_l^{\mathrm{out}} - \mu_l = 0.\nonumber
\end{align}

Since, for all $l$, $\mu_l \geq 0$ and complementary slackness requires $\mu_l (b_l^*-\eta) = 0$ \citep{boyd}, we have two cases:\\
\textbf{Case 1:} If $b_l^* > \eta$, complementary slackness implies $\mu_l = 0$, and thus:
\[
2^{-2 b_l^*} = \frac{\lambda d_l^{\mathrm{in}} d_l^{\mathrm{out}}}{2 \ln(2)\,a_l}.
\]

Taking logarithms on both sides and rearranging terms explicitly, we obtain:
\[
b_l^* = \frac{1}{2\ln(2)}\left(\ln\frac{a_l}{d_l^{\mathrm{in}} d_l^{\mathrm{out}}}\right) + \frac{1}{2\ln(2)}\left(\ln(2\ln(2)) - \ln(\lambda)\right) = \frac{1}{2\ln(2)}\left(\ln\frac{a_l}{d_l^{\mathrm{in}} d_l^{\mathrm{out}}}\right) + C(\lambda).
\]
where, the constant term $C(\lambda)$ is defined as $C(\lambda) \coloneqq \frac{1}{2\ln(2)}(\ln(2\ln(2)) - \ln(\lambda))$.

\textbf{Case 2:} If $b_l^* = \eta$, we directly have:
\[
b_l^* = \eta.
\]

Combining these cases yields the optimal fractional bit allocation:
\[
b_l^* = \max\left\{\eta, \frac{1}{2\ln(2)}\left(\ln\frac{a_l}{d_l^{\mathrm{in}} d_l^{\mathrm{out}}}\right) + C(\lambda)\right\},
\]
where the constant $C(\lambda)$ is chosen such that the memory constraint
\[
\sum_{l=1}^L b_l^* d_l^{\mathrm{in}} d_l^{\mathrm{out}} \le M,
\]
is tight (\ie, equality holds). This equality condition emerges naturally, as the objective function \eqref{eq:supp_mpq_frac} is non-increasing in $b_l$ due to the non-negativity assumption ($a_l\ge0$). Therefore, increasing $C(\lambda)$ until the constraint is exactly met cannot worsen the objective, completing the proof.
\end{proof}

To empirically validate our approximation of quantization errors by their expectation, we quantized random Gaussian matrices multiple times and observed consistently low variance in the quantizataion errors (distortion). Specifically, we performed quantization on 32 random standard Gaussian matrices of shape $(4096, 4096)$, consistent with the LLaMA 3.1-8B self-attention query projection layer. \Cref{tab:empirical_distortion} reports the mean and standard deviation of the normalized quantization error ($\|\hat{W}-W\|_2^2/\|W\|_2^2$) values. The results demonstrate low variance, supporting our assumption.

\begin{table}[t]
\centering
\caption{Empirical distortion statistics for quantizing random Gaussian matrices ($4096 \times 4096$) with Q-Palette quantizers over 32 trials.}
\label{tab:empirical_distortion}
\vspace{0.3em}
\begin{tabular}{lcc}
\toprule
\textbf{Quantizer} & \textbf{Mean distortion} & \textbf{Std. deviation} \\
\midrule
Ours-TCQ-2 & 0.07101 & 8.36E-06 \\
Ours-NUQ-2 & 0.11747 & 4.24E-05 \\
Ours-VQ-2 & 0.10857 & 2.93E-05 \\
\bottomrule
\end{tabular}
\end{table}

\section{Analysis of the quantization optimality gap}
\label{app:quantization_gap}

\begin{table}[t]
    \centering
    \caption{Optimality gap analysis for different quantizer sets on LLaMA 3.1-8B. TCQ-ALL includes all fractional TCQ bitwidths from 1.5 to 5.0 in Q-Palette.}
    \resizebox{\textwidth}{!}{
    \begin{tabular}{lccccc}
        \toprule
        Quantizer pool& Bitwidth & Distortion gap ($\downarrow$) & Bit allocation gap ($\downarrow$) & Total gap ($\downarrow$) & Surrogate objective ($\downarrow$)\\  
        \midrule
        VQ-2,3,4	&3.25	&0.0586&	0.0130	&0.0716	&0.1219\\
        TCQ-2,3,4	&3.25	&0.0198&	0.0129	&0.0327	&0.0830\\
        TCQ-ALL	&3.25	&0.0145	&0.0023	&0.0168	&0.0671\\
        Ideal Gaussian quantizer	&3.25&	0&	0&	0&	0.0503\\
        \midrule
        VQ-2,3,4	&2.50	&0.1282	&0.0178	&0.1460	&0.2883\\
        TCQ-2,3,4	&2.50	&0.0306	&0.0178	&0.0484	&0.1907\\
        TCQ-ALL	&2.50	&0.0260	&0.0034	&0.0294	&0.1717\\
        Ideal Gaussian quantizer	&2.50&	0&	0&	0&	0.1423\\
        \bottomrule
    \end{tabular}}
    \label{tab:rebuttal_3_2}
\end{table}

In practice, we typically have access only to a finite set of quantizers $\mathcal{Q}=\{Q_1, \dots, Q_N\}$, which may not achieve the theoretical distortion-rate optimality. Under this constraint, the original memory-constrained MSQ problem \eqref{eq:appmpq} can still be solved, but the resulting solution may deviate from the optimal solution derived under the assumption of ideal Gaussian quantizers (\Cref{thm:optfrac}). In this section, we formally analyze this quantization optimality gap.

Let ${Q_l^*}$ denote the optimal quantizer selected from the quantizer set $\mathcal{Q}$ for each $l$, obtained by solving problem \eqref{eq:appmpq}. Then, the quantization optimality gap, defined as the difference in the objective values between the practical optimal solution and the theoretically optimal fractional bitwidth solution ${b_l^*}$ (derived in Theorem \ref{thm:optfrac}), can be expressed as:
\begin{align}
\sum_{l=1}^L a_l \left( \mathrm{err}(Q_l^*; W_l) - 2^{-2 b_l^*} \right).
\end{align}

We decompose the total gap into two intuitive terms, the distortion gap and the bit allocation gap as follows:
\[
\underbrace{\sum_{l=1}^L a_l \left(\mathrm{err}(Q_l^*) - 2^{-2 b_l^*}\right)}_{\text{Total gap}} = \underbrace{\sum_{l=1}^L a_l \left(\mathrm{err}(Q_l^*) - 2^{-2 \mathrm{bit}(Q_l^*)}\right)}_{\text{Distortion gap}}
+ \underbrace{\sum_{l=1}^L a_l\left(2^{-2\mathrm{bit}(Q_l^*)} - 2^{-2 b_l^*}\right)}_{\text{Bit allocation gap}},\nonumber
\]
where we abbreviate $\mathrm{err}(Q_l^*;W_l)$ by $\mathrm{err}(Q_l^*)$ and $\mathrm{bit}(Q_l^*;W_l)$ by $\mathrm{bit}(Q_l^*)$ for the simplicity. 

Due to classical rate-distortion theory and the optimality of ${b_l^*}$ as the solution of the continuous optimization problem \eqref{eq:supp_mpq_frac}, each term in this decomposition is non-negative. Specifically, the first term, $\left(\mathrm{err}(Q_l^*) - 2^{-2 \mathrm{bit}(Q_l^*)}\right)$, quantifies how closely each practical quantizer $Q_l^*$ approaches the theoretical Gaussian distortion bound. The second term, $\left(2^{-2\mathrm{bit}(Q_l^*)} - 2^{-2 b_l^*}\right)$, measures how well the available bitwidths $\{\mathrm{bit}(Q) \mid Q\in\mathcal{Q}\}$ approximate the optimal bit allocation $\{b_l^*\}$. 

To investigate how quantizer-set design affects each component of the gap, \Cref{tab:rebuttal_3_2} reports the distortion and bit allocation gaps for LLaMA 3.1-8B under 2.5- and 3.25-bit constraints. Two key observations emerge:
\begin{itemize}
    \item \textbf{Effect of quantizer quality.} VQ-2,3,4 and TCQ-2,3,4 show comparable bit allocation gaps, but TCQ-2,3,4 yields much smaller distortion gaps. Thus, their performance difference mainly stems from quantizer quality rather than bit allocation.
    \item \textbf{Effect of broader bitwidth support.} Comparing TCQ-2,3,4 to TCQ-ALL reveals a substantial reduction in bit allocation gap, demonstrating that richer fractional-bitwidth support enables more accurate bit allocation and a closer match to the theoretical ideal.
\end{itemize}
This analysis motivates the design of \emph{Q-Palette}, which provides high-quality TCQ quantizers and broad fractional-bitwidth coverage to reduce both distortion and bit allocation gaps.

Analyzing these factors provides insight into improving practical quantizer designs and selecting more effective quantizer sets to reduce the quantization optimality gap.
Motivated by this analysis, we specifically designed \emph{Q-Palette} as a versatile set of fractional-bit quantizers, including TCQ, which closely approaches the theoretical distortion bound, and providing broad fractional bitwidth support.

\section{Additional details on quantizers in Q-Palette}
\label{app:formaldef}

In this section, we provide implementation details for each quantizer family in Q-Palette. For each quantizer, we describe: (i) codebook construction, (ii) dequantization, and (iii) quantization procedures. The quantization step relies on quantizer-specific round-to-nearest (RTN) operators, with procedures differing based on data availability:

\begin{itemize}
    \item \textbf{Data-free scenario:} We partition each weight matrix into scalar elements (NUQ) or vectors (VQ, TCQ). Each partition is independently quantized using the RTN operator, and their resulting binary representations are concatenated to form the final quantized weight representation.
    \item \textbf{Data-aware scenario:} We adopt a block LDLQ framework as introduced in previous methods~\citep{gptq,quipsharp,qtip}. Specifically, for each weight matrix, we perform quantization in a block-wise manner guided by Hessian approximations, with quantizer-specific block sizes: 1 for NUQ, 2 for VQ, and 16 for TCQ. This method sequentially processes weight rows from first to last, iteratively updating weights based on the Hessian and cumulative quantization errors, and quantizing each updated weight via the RTN operator. For detailed formulations and additional theoretical background, please refer  to QUIP\# and QTIP~\citep{quipsharp,qtip}.
\end{itemize}
\subsection{Non-uniform scalar quantization (NUQ)}
\label{app:nuq_details}

\paragraph{Codebook construction.}
For NUQ at bitwidth $b$, we construct the LUT using \texttt{flash1dkmeans}, a fast 1D $k$-means algorithm~\citep{flash1dkmeans}, applied to $10^8$ randomly sampled standard Gaussian values. We set the number of clusters to $k=2^b$, resulting in a LUT $\in\mathbb{R}^{2^b}$.

\paragraph{Dequantization.}
Given the LUT, a binary representation $\mathbf{r}\in\{0,1\}^b$ is dequantized as:
\[
\mathrm{dq}(\mathbf{r};\mathrm{LUT}) \coloneqq \mathrm{LUT}[\mathrm{int}(\mathbf{r})],
\]
where $\mathrm{int}(\mathbf{r})$ converts the binary representation $\mathbf{r}\in\{0,1\}^b$ to its corresponding integer index in the range $[0, 2^b - 1]$.

\paragraph{Quantization.}
NUQ's RTN operator $\mathrm{RTN}:\reals\to\{0,1\}^b$ maps a scalar input $v\in\reals$ to the nearest LUT entry:
\[
\mathrm{RTN}(v; \mathrm{LUT})\coloneqq \argmin_{\mathbf{r}\in\{0,1\}^b} |v-\mathrm{LUT}[\mathrm{int}(\mathbf{r})]|.
\]
Quantization follows the general procedures described above for data-free and data-aware scenarios.
\subsection{Vector quantization (VQ)}
\label{app:vq_details}

\paragraph{Codebook construction.}
For VQ at bitwidth $b$, we construct the codebook using the scikit-learn implementation of the 2D $k$-means algorithm, which employs Lloyd's algorithm~\citep{scikit-learn,lloyd}. We set the hyperparameters to \texttt{max\_iter=300} and \texttt{tol=1e-6}, and apply the algorithm to random standard Gaussian samples, using $10^8$ samples for bitwidths $b \leq 5$ and $10^7$ samples for bitwidths $b > 5$. The number of clusters is set to $k=2^{2b}$, resulting in a LUT $\in\mathbb{R}^{2^{2b}\times 2}$ consisting of $2^{2b}$ number of $2$D vectors.

\paragraph{Dequantization.}
Given the LUT, a binary representation $\mathbf{r}\in\{0,1\}^{2b}$ is dequantized similarly to NUQ, now mapping to a vector:
\[
\mathrm{dq}(\mathbf{r};\mathrm{LUT}) \coloneqq \mathrm{LUT}[\mathrm{int}(\mathbf{r})]\in\mathbb{R}^2,
\]
where $\mathrm{int}(\mathbf{r})$ converts the binary representation $\mathbf{r}\in\{0,1\}^{2b}$ into its corresponding integer index in the range $[0, 2^{2b} - 1]$.

\paragraph{Quantization.}
The RTN operator specific to VQ, $\mathrm{RTN}:\mathbb{R}^2\to\{0,1\}^{2b}$, maps a $2$D input vector $\mathbf{v}\in\mathbb{R}^2$ to the nearest LUT entry:
\[
\mathrm{RTN}(\mathbf{v}; \mathrm{LUT})\coloneqq \argmin_{\mathbf{r}\in\{0,1\}^{2b}}\|\mathbf{v}-\mathrm{LUT}[\mathrm{int}(\mathbf{r})]\|_2.
\]
Quantization then follows the general procedures described above for data-free and data-aware scenarios.
\subsection{Trellis-coded quantization (TCQ)}
\label{app:tcq_details}

\subsubsection{Generic TCQ}

\paragraph{Codebook construction.}
We follow the same protocol as QTIP~\citep{qtip}, using scikit-learn's $k$-means implementation based on Lloyd's algorithm~\citep{scikit-learn,lloyd}. Specifically, we cluster $2^{20}$ randomly sampled $2$D standard Gaussian vectors (with appropriate scaling) into $2^{\texttt{tlut\_bits}}$ clusters, obtaining cluster centroids $\texttt{tlut}\in\mathbb{R}^{2^{\texttt{tlut\_bits}}\times 2}$. We then construct the final  codebook $\mathrm{LUT}\in\reals^{2^L\times 2}$ using the \emph{hybrid} codebook construction using the following \texttt{quantlut\_sym} function from the QTIP codebase:
\begin{verbatim}
def quantlut_sym(tlut, L, tlut_bits):
    with torch.no_grad():
        lut = torch.arange(1 << L, device=tlut.device)
        lut = (lut + 1) * lut
        sflp = 1 - ((lut >> 15) & 1) * 2
        lut = (lut >> (16 - tlut_bits - 1)) & ((1 << tlut_bits) - 1)
    lut = tlut[lut]
    lut[:, 0] = lut[:, 0] * sflp
    return lut
\end{verbatim}

Following QTIP, we set $L=16$ for all TCQ quantizers. We set \texttt{tlut\_bits} to $9$ for bitwidths $b \le 4$, and to $10,11$ for new fractional bitwidths $4.5,5.0$, respectively.

\paragraph{Dequantization.}
We adopt the bitshift variant of TCQ with tail-biting from QTIP. Given a binary representation $\mathbf{r}\in\{0,1\}^{sT/V}$, we define parameters explicitly as follows:
\begin{itemize}
    \item $s$: shift size, set as $s=2b$ for bitwidth $b$,
    \item $V$: vector size, fixed to $V=2$,
    \item $L$: codebook length (or sliding window size), fixed to $L=16$,
    \item $T$: trellis size, set as $T=256$.
\end{itemize}
The dequantization then proceeds via sliding-window LUT indexing with tail-biting:
\[
\mathrm{dq}(\mathbf{r};\mathrm{LUT}) 
\coloneqq \mathrm{concat}_{i=0}^{T/V-1}~\mathrm{LUT}\left[\mathbf{r}[i\cdot s:i\cdot s+L]\right]\in\mathbb{R}^T,
\]
where indices exceeding the length of $\mathbf{r}$ wrap around due to tail-biting, resulting in $s/V$ bitwidth.

\paragraph{Quantization.}
Given a target vector $\mathbf{v}\in\reals^T$ to quantize, we use the same RTN operator as detailed in QTIP, which leverages the Viterbi algorithm to find the optimal binary representation $\mathbf{r}\in\{0,1\}^{sT/V}$ that is dequantized into the vector $\hat{\mathbf{v}}=\mathrm{dq}(\mathbf{r};\mathrm{LUT})$ closest to the vector $\mathbf{v}$ \citep{viterbi,qtip}. Quantization procedures follow the general data-free and data-aware frameworks described earlier.

\subsubsection{Half-TCQ}

\paragraph{Codebook construction.}
For half-TCQ, which quantizes half of the weight using bitwidth $b$ and the other half using $b+0.5$, we follow exactly the same codebook construction procedure described above for TCQ at bitwidth $b+0.5$.

\paragraph{Dequantization.}
Dequantization separately processes two partitions of the weight matrix: the first half using binary representations of bitwidth $b$, and the second half using bitwidth $b+0.5$. The resulting vectors from each half are then concatenated into a complete dequantized weight vector.

\paragraph{Quantization.}
We apply the RTN operator corresponding to TCQ-$b$ to the first half of the weights, and the RTN operator corresponding to TCQ-$(b+0.5)$ to the second half. This procedure is consistently used in both the data-free and data-aware (block LDLQ) scenarios.

\section{Additional details and performance analysis of CUDA kernels}
\label{app:analonkernels}

We implemented two types of CUDA kernels: (i) Tensor Core-based kernels and (ii) CUDA Core-based kernels. Here, we first detail our Tensor Core-based kernel implementations. Then, we describe our CUDA Core-based kernel implementations. Finally, we provide additional performance analysis on our kernels.

\begin{table}[t]
  \centering
  \caption{Decoding-latency speedup of quantized LLaMA 3.1-8B models
           relative to the FP16 baseline on an RTX3090 GPU. `TC' and `CC' denote Tensor Core and CUDA Core kernels, respectively.}
  % \label{tab:throughput_comparison}
  \resizebox{0.96\linewidth}{!}{%
\begin{tabular}{cccccccccccccccc}
    %%%%%%%%%%%%%%%%%%%%%%%%%%%%
    \toprule
    \multicolumn{11}{c}{\textbf{Decoding-latency speedup compared to FP16 (batch size = 1)}}\\
    \cmidrule(lr){1-11}
    Quantizer & 2.0 & 2.25 & 2.5 & 2.75 & 3.0 & 3.25 & 3.5 & 3.75 & 4.0 & 4.25 \\
    \midrule
    NF w/ FLUTE \citep{normalfloat,flute}     &     -        & - & - & - &     -        & 1.63$\times$   & - & - &   -         & 1.63$\times$ \\% OK
    QTIP  \citep{qtip}    & 2.17$\times$ & - & - & - & 2.02$\times$ & - & - & - & 1.92$\times$& - \\
    Ours-NUQ-TC  & 2.82$\times$ & - & - & - & 2.53$\times$ & - & - & - & 2.28$\times$&- \\%ok
    Ours-NUQ-CC  & 3.07$\times$ & - & - & - & \textbf{2.75}$\times$ & - & - & - & \textbf{2.38}$\times$&- \\%ok
    Ours-VQ-TC  & 2.83$\times$ & - & 2.54$\times$ & - & 2.54$\times$ & - & 2.10$\times$ & - & 2.28$\times$ &-  \\%ok
    Ours-VQ-CC  & \textbf{3.11}$\times$ & - & \textbf{2.94}$\times$ & - & 2.74$\times$ & - & \textbf{2.52}$\times$ & - & 2.36$\times$ &-  \\%ok
    Ours-TCQ-TC  & 2.56$\times$ & \textbf{2.32}$\times$ & 2.29$\times$ & \textbf{2.24}$\times$ & 2.36$\times$ & \textbf{2.13}$\times$ & 2.14$\times$ & \textbf{2.16}$\times$ & 2.25$\times$&\textbf{1.94}$\times$\\%ok
    %%%%%%%%%%%%%%%%%%%%%%%%%%%%
    \midrule
    \addlinespace[1.2ex]    % 원하는 만큼 여백
    \multicolumn{11}{c}{\textbf{Decoding latency speedup compared to FP16 (batch size = 8)}}\\
    \cmidrule(lr){1-11}
    Quantizer & 2.0 & 2.25 & 2.5 & 2.75 & 3.0 & 3.25 & 3.5 & 3.75 & 4.0 & 4.25 \\
    \midrule
    NF w/ FLUTE \citep{normalfloat,flute}   &     -        & - & - & - &     -        & 1.55$\times$   & - & - &   -         & 1.55$\times$ \\ % OK
    QTIP  \citep{qtip}    & 0.43$\times$ & - & - & - & 0.39$\times$ & - & - & - & 0.36$\times$ & - \\
    Ours-NUQ-TC  & \textbf{2.20}$\times$ & - & - & - & \textbf{2.00}$\times$ & - & - & - & 1.85$\times$ & -\\
    Ours-NUQ-CC  & 1.75$\times$ & - & - & - & 1.70$\times$ & - & - & - & 1.49$\times$\\
    Ours-VQ-TC  & \textbf{2.20}$\times$ & - & \textbf{1.99}$\times$ & - & 1.97$\times$ & - & 1.65$\times$ & - & 1.84$\times$ & - \\
    Ours-VQ-CC  & 1.88$\times$ & - & 1.81$\times$ & - & 1.78$\times$ & - & 1.69$\times$ & - & 1.69$\times$ & -\\
    Ours-TCQ-TC  & 2.04$\times$ & \textbf{1.89}$\times$ & 1.84$\times$ & \textbf{1.88}$\times$ & 1.93$\times$ & \textbf{1.79}$\times$ & \textbf{1.72}$\times$ & \textbf{1.83}$\times$ & \textbf{1.89}$\times$ & \textbf{1.67}$\times$ \\
    \bottomrule
\end{tabular}}
\label{tab:appthroughputcomparison}
\end{table}
\subsection{Tensor Core-based kernel implementation }

Our Tensor Core-based kernels support various quantization schemes (TCQ, NUQ, and VQ) and are implemented by extending the QTIP kernels, which originally supported TCQ at integer bitwidths ($2$, $3$, $4$ bits)~\citep{qtip}. Specifically, for TCQ, we introduce optimized support for fractional bitwidths at fine-grained intervals (\eg, $1.5$, $2.5$, $3.5$, $4.5$, $5.0$ bits) by carefully extending the warp-level \texttt{mma} instruction-based implementation provided by QTIP. Additionally, we adapted the QTIP's kernel design principles to implement efficient Tensor Core-based kernels for NUQ and VQ. These extensions involved non-trivial engineering efforts, particularly for precisely mapping quantized weights into Tensor Core \texttt{mma} instruction fragments. 
To further reduce overhead at larger batch sizes, we traverse each quantized weight exactly once, directly performing register-level dequantization upon loading without intermediate storage. Input activations are cached in shared memory to enable efficient reuse across multiple weight multiplications, substantially improving inference efficiency.

\paragraph{Simplified Kernel Structure.}
Below, we provide a brief kernel structure highlighting key functions, their purposes, and file locations:
\begin{verbatim}
// kernels/tcq-kernels/src/inference.cu
device void load_reg_cs<R>(compressed, idx, laneId, &regs) {
  // Maps quantized TCQ weights (bitwidth R/2) to mma fragments
  // Supports fractional bitwidths (1.5,2.0,2.5,...,4.5,5.0)
}

// kernels/vq-tensor-kernels/src/inference.cu
device void load_reg_cs<R, LUT_TYPE L>(compressed, idx, laneId, &regs) {
  if (L == LUT_TYPE::SQ_LUT) {
    // Maps quantized NUQ weights (bitwidth R) to mma fragments
  } else if (L == LUT_TYPE::VQ_LUT_2) {
    // Maps quantized VQ weights (bitwidth R/2) to mma fragments
  }
}


// General Tensor Core kernel structure
// - kernels/tcq-kernels/src/inference.cu: kernel_decompress_gemm 
//     for TCQ fused kernel
// - kernels/tcq-kernels/src/inference.cu: kernel_decompress_gemm_combt 
//     for TCQ-Half fused kernel
// - kernels/vq-tensor-kernels/src/inference.cu: kernel_decompress_gemm
//     for VQ and NUQ fused kernel
global void tensor_core_kernel(...) {
  // Manages LUT and inputs in shared memory
  // Manages quantized weights in registers
  // Maps quantized weights to Tensor Core mma fragments via `load_reg_cs'
  // Uses Tensor Core mma instructions for matmul routine
  // Writes final results after reduction
}


// Dequantization-only kernels
// - kernels/tcq-kernels/src/inference.cu: kernel_decompress
//     for TCQ dequantization kernel
// - kernels/tcq-kernels/src/inference.cu: kernel_decompress_combt 
//     for TCQ-Half dequantization kernel
// - kernels/vq-tensor-kernels/src/inference.cu: kernel_decompress
//     for VQ and NUQ dequantization kernel
global void kernel_decompress(...) {
  // Dequantizes weights independently (no matmul)
}
\end{verbatim}

\paragraph{Full Implementation.}
The complete Tensor Core-based kernel implementations are included in our public code release (directories \texttt{kernels/tcq-kernels} and \texttt{kernels/vq-tensor-kernels}).

\subsection{CUDA Core-based kernel implementation}
Our CUDA Core-based kernels explicitly leverage CUDA Core instructions to support NUQ and VQ quantization schemes, extending the Any-Precision LLM kernels originally developed for NUQ~\citep{anyprec}. Specifically, we replaced the original bit-plane encoding with simpler bit-packing encoding to streamline the dequantization procedure.

\paragraph{Kernel Structure and Implementation}
Below, we provide a brief kernel structure highlighting key functions, their purposes, and file locations:

\medskip
\begin{verbatim}
// - kernels/vq-cuda-kernels/src/gemm_routines.cu
device void vq_pack_dequant_routine<nbits, vec_sz>(Bcode, B_row, shC) {
  // Unpack quantized VQ weights `Bcode' (bitwidth nbits/vec_sz)  \
  //      to the half2 array `B_row' using the lookup-table `shC' \
  //     (e.g., 1.5-bit quantization: nbits=3, vec_sz=2)
}

// - kernels/sq-cuda-kernels/gemm_routines.cu
device void pack_dequant<nbits>(Bcode_row, B_row, shC) {
  // Unpack quantized NUQ weights `Bcode' (bitwidth nbits) \
  //      to the half2 array `B_row' using the lookup-table `shC' 
}


// General CUDA Core kernel structure
// - kernels/sq-cuda-kernels/gemm_routines.cu: sq_gemm_fp16
//     for NUQ fused kernel
// - kernels/vq-cuda-kernels/src/gemm_routines.cu: vq_pack_gemm_fp16
//     for VQ fused kernel
global void cuda_core_kernel(...) {
  // Manages LUT in shared memory
  // Manages quantized weights in registers
  // Unpacks quantized weights to half2 array via corresponding 
  //     pack_dequant_routine
  // Uses CUDA Core half-precision FMA (hfma2) instructions for matmul
  // Writes final results after reduction
}


// Dequantization-only kernels
// - kernels/sq-cuda-kernels/gemm_routines.cu: pack_dequant_kbit_store
//     for NUQ dequantization kernel
// - kernels/vq-cuda-kernels/src/gemm_routines.cu: \
//                                              vq_pack_dequant_kbit_store 
//     for VQ dequantization kernel
global void kernel_decompress(...) {
  // Dequantizes weights independently (no matmul)
}
\end{verbatim}

\paragraph{Full Implementation.}
The complete CUDA Core-based kernel implementation is included in our public code release (directories \texttt{kernels/sq-cuda-kernels} and \texttt{kernels/vq-cuda-kernels}).

\subsection{Additional performance analysis}
This section complements Table 2 in the main paper by providing additional performance analysis on a different hardware configuration (with RTX3090 GPU).
\Cref{tab:appthroughputcomparison} summarizes the decoding-latency speedup of quantized LLaMA 3.1-8B models relative to the FP16 baseline on an RTX3090 GPU, complementing Table 2 in the main paper, which reports results on an RTX4090 GPU. Our quantizers consistently outperform baseline methods across both evaluated batch sizes ($1$ and $8$). 

Notably, for batch size $1$ on RTX3090, the latency speedup gap between our CUDA Core-based (`CC') and Tensor Core-based (`TC') kernels is larger than observed on RTX4090 (Table 2). This difference highlights significant hardware dependencies in kernel performance, justifying the importance of providing multiple kernel implementations. Such flexibility enables optimal kernel selection tailored to specific hardware platforms and workload requirements.

\section{Implementation details of mixed-scheme quantization}
\label{app:MSQ_details}
\subsection{Loss term computation}

% \paragraph{Data-free loss term.}
% In data-free scenarios, we estimate the loss term via the linearity theorem \citep{higgs}, \ie, $\ell_{lq}=a_l \cdot \mathrm{err}(Q_q;W_l)$. 
% Specifically, we approximate the quantization error $\mathrm{err}(Q_q;W_l)$ using pre-computed distortion values obtained by quantizing random Gaussian matrices, thus avoiding explicit quantization for every weight matrix $W_l$. 
% To determine the sensitivity coefficient $a_l$, we adopt the procedure introduced in HIGGS \citep{higgs}. 
% We first randomly generate 128K tokens from the given LLM. Then, for each layer $l$, we inject random Gaussian noise scaled to specific norms $n_{il} = \frac{\sqrt{i}}{16}$ for $1\le i \le 16$ and measure the resulting increase in KL-divergence loss on these 128K tokens corresponding to each noise scale $n_{il}$. Due to the linearity theorem, the increase in loss approximately follows the linear relation $\Delta\mathcal{L}_i^{(l)} \simeq n_{il}^2 a_l$, enabling us to estimate $a_l$ by linearly fitting data points $(n_{il}^2,\Delta\mathcal{L}_i^{(l)})$. This procedure requires $16 \times L$ computations of KL-divergence loss, where $L$ is the total number of layers, but can be performed in an embarrassingly parallel manner. Furthermore, the computed sensitivity coefficients $a_l$ can be reused for all data-free MSQ scenarios, incurring only a one-time computational cost.

\paragraph{Data-free loss term.}
In data-free scenarios, we estimate the loss term via the linearity theorem \citep{higgs}, \ie, $\ell_{lq}=a_l \cdot \mathrm{err}(Q_q;W_l)$. Specifically, we approximate the quantization error $\mathrm{err}(Q_q;W_l)$ using pre-computed distortion values obtained by quantizing random standard Gaussian matrices, thereby avoiding explicit quantization for each weight matrix $W_l$. To determine the sensitivity coefficient $a_l$, we adopt the procedure introduced in HIGGS \citep{higgs}. First, we randomly generate 128K tokens from the given LLM. Then, for each layer $l$, we inject random Gaussian noise scaled to specific norms $n_{li} = \frac{\sqrt{i}}{16}$ for $1 \le i \le 16$ and measure the resulting increase in the KL-divergence loss computed over these 128K tokens:
\[
\Delta \mathcal{L}_{li} =
\mathcal{L}\left(\{W_{l'} + \delta_{l'l}\cdot n_{li}\cdot\|W_l\|_2\cdot \epsilon_{l} / \|\epsilon_{l}\|_2\}_{l'=1}^L\right)   -\mathcal{L}\left(\{W_{l'}\}_{l'=1}^L\right),
\quad\epsilon_{l} \sim \mathcal{N}(0,I),
\]
where $\delta_{l'l}$ is the Kronecker delta (1 if $l'=l$, else 0), ensuring noise is injected exclusively into layer $l$, and $\epsilon_{l}$ is a standard Gaussian noise matrix matching the dimensions of layer $l$. Due to the linearity theorem, the increase in loss approximately follows the linear relation $\Delta\mathcal{L}_{li} \simeq n_{li}^2 a_l$, enabling us to estimate $a_l$ by linearly fitting the data points $(n_{li}^2,\Delta\mathcal{L}_{li})$. This procedure requires $16 \times L$ computations of the KL-divergence loss, where $L$ is the total number of layers, but it can be performed in an embarrassingly parallel manner. Furthermore, the computed sensitivity coefficients $a_l$ can be reused for all data-free MSQ scenarios, incurring only a one-time computational cost.

\paragraph{Data-aware loss terms.}
For data-aware scenarios, we employ two different types of loss terms: \emph{linearity} and \emph{actual}.

\begin{itemize}
\item \textbf{Linearity-based loss term.} Similar to the data-free scenario, we utilize the linearity theorem-based approximation. However, we replace the KL-divergence loss computed over randomly generated 128K tokens with the perplexity loss computed over 1M tokens from the RedPajama dataset \citep{redpajama}.

\item \textbf{Actual loss term.} Here, we explicitly calculate the actual perplexity increase caused by quantization. Specifically, we use 256K tokens from the RedPajama dataset and compute the loss term as follows:
\[
\ell_{lq} \coloneqq  \mathcal{L}(\{Q_{\mathrm{default}}^{l'}(W_{l'})\}, \text{replace } l\text{-th layer with } Q_q(W_l)) - \mathcal{L}(\{Q_{\mathrm{default}}^{l'}(W_{l'})\}),
\]
where $Q_{\mathrm{default}}^l$ denotes the default quantizer for layer $l$ (\eg, we use TCQ-2 for 2-bit quantization, TCQ-3 for 3-bit quantization). 
This explicitly measures the actual empirical perplexity increase caused by quantize layer $l$ with a quantizer $Q_q$.
\end{itemize}

For the data-aware experiments in Table 4 of the main paper, we exclusively employ the `actual' loss term. Additionally, in the ablation study provided in \Cref{tab:app_ablation2} (see \Cref{app:additional}), we explicitly compare the performance using the two types of loss terms, `actual' and `linearity'.

\subsection{Latency profiling}

For latency profiling, we measure the execution time of the CUDA kernels corresponding to each quantizer. Since computations such as normalization layers, rotations, and self-attention operations remain identical across quantizers, we specifically profile the latency of each fused dequantization and matrix multiplication kernel. To accurately estimate the overall inference latency, we first measure the end-to-end inference latency of several randomly selected quantization configurations using \texttt{torch.compile}. We then subtract the sum of kernel latencies to estimate the latency overhead caused by common computations (\eg, normalization layers), uniformly distributing this overhead across all quantizer profiles.

When considering fused kernels, we separately measure kernel latencies corresponding to each fusion pattern.  
To account for latency overhead variations due to different fusion patterns, we adjust this overhead bias accordingly.

\subsection{Optimizer}

To solve the mixed-scheme quantization optimization problem, we employ Google's OR-Tools optimization suite \citep{ortools}, specifically utilizing the SCIP solver with a time limit of 60 seconds.

\section{Settings for Figure 1}
\label{app:fig1_settings}

In Figure 1, we present qualitative comparisons among quantization frameworks based on Q-Palette and the NormalFloat baseline with FLUTE kernels \citep{normalfloat,flute}, evaluated on the LLaMA 3.1-8B model using an RTX4090 GPU at a batch size of $1$. Specifically, we validate WikiText2 perplexity at a sequence length of 8192 and measure the inference speedup compared to the FP16 baseline under the following detailed settings:

\paragraph{NormalFloat (Baseline).}  
For NormalFloat, we employ FLUTE with a codebook size of $2^3$ and a group size of $64$, resulting in an average bitwidth of $3.25$. We utilize the FLUTE kernels released in the FLUTE codebase~\citep{normalfloat,flute}, optimized for inference on an RTX4090 GPU, to measure inference latency and compute the speedup.

\paragraph{Single-scheme quantization with TCQ-3.25 (Ours).}  
We apply data-free quantization uniformly to all layers of the LLaMA 3.1-8B model using our TCQ-3.25 quantizer from Q-Palette. This corresponds to the half-TCQ scheme which quantizes half of the weight matrix at bitwidth $3.0$ and the other half at $3.5$.

\paragraph{MSQ with Q-Palette (Ours).}  
We leverage the full set of quantizers available in Q-Palette as our search space. For NUQ and VQ quantizers, both Tensor-Core and CUDA-Core kernel implementations are considered during optimization. Sensitivity coefficients $a_l$ are computed following the HIGGS protocol~\citep{higgs} as detailed in \Cref{app:MSQ_details}, and we utilize pre-computed distortion values as explained in Section 4.1 of the main paper. Given these sensitivity coefficients, distortion values, and pre-profiled latency measurements, we solve the latency-constrained MSQ optimization (Equation~(3) in the main paper) to identify Pareto-optimal quantizer selections under various latency constraints. From this resulting accuracy-latency trade-off curve, we select the configuration that clearly improves both latency and perplexity over the TCQ-3.25 baseline.

\paragraph{Fusion-aware MSQ with Q-Palette (Ours).}  
We further incorporate layer fusion into our MSQ formulation by additionally profiling latency measurements for fused linear-layer combinations and solving the fusion-aware optimization (Equation~(4) in the main paper). This approach explicitly captures the latency reductions achievable via layer fusion, enabling joint optimization of quantization schemes and layer fusion decisions. We select a quantization configuration that clearly improves both inference speed and perplexity compared to the MSQ baseline without layer fusion.

\medskip
The detailed quantization configurations correspond to these scenarios are visualized in Figure 1. 

\section{Experimental settings and additional results}
\label{app:additional}

\subsection{Experimental settings}
\subsubsection{Evaluation metric details}
For evaluating language modeling performance, we measure perplexity on the WikiText2 dataset~\citep{wikitext2}, using sequence lengths of 4096 tokens for LLaMA 2 models and 8192 tokens for LLaMA 3 models. Additionally, we report zero-shot accuracy on five downstream tasks: ARC-easy, ARC-challenge, HellaSwag, PiQA, and WinoGrande \citep{wikitext2,arc,hellaswag,piqa,winogrande}. Zero-shot evaluations are conducted using the \texttt{lm\_eval} library (version 0.4.4).

\subsubsection{Device details}

\paragraph{RTX 4090 GPU experiments.}
We conduct our RTX 4090 GPU experiments using a cloud environment provided by RunPod, with the following hardware and software specifications:
\begin{itemize}
\item \textbf{GPU:} NVIDIA RTX 4090
\item \textbf{CPU:} AMD EPYC 7B13 64-Core Processor
\item \textbf{OS:} Ubuntu 22.04.5
\item \textbf{CUDA Version:} 12.4
\end{itemize}

\paragraph{RTX 3090 GPU experiments.}
We conduct our RTX 3090 GPU experiments using our local machine, detailed as follows:
\begin{itemize}
\item \textbf{GPU:} NVIDIA RTX 3090
\item \textbf{CPU:} AMD EPYC 7402 24-Core Processor
\item \textbf{OS:} Ubuntu 22.04.1
\item \textbf{CUDA Version:} 12.4
\end{itemize}

\subsubsection{Baseline configurations}

\paragraph{HQQ \citep{hqq}.} According to the official documentation, inference acceleration kernels (\eg, Gemlite) are supported only for configurations with \texttt{axis=1}. Thus, we use the following configurations:
\begin{itemize}
\item 4.25-bit: \texttt{nbits=4, group\_size=64, axis=1} (Gemlite kernel),
\item 4.02-bit: \texttt{nbits=4, group\_size=1024, axis=1} (Gemlite kernel),
\item 3.25-bit: \texttt{nbits=3, group\_size=64, axis=1} (FLUTE kernel).
\end{itemize}
For 4-bit instances, we utilize Gemlite kernels following best practices from the HQQ documentation \citep{gemlite,hqq}; for the 3-bit instance, we report inference time using the FLUTE kernel \citep{flute}.

\paragraph{NormalFloat \citep{normalfloat}} We utilize FLUTE's NormalFloat implementation with configurations similar to HQQ \citep{flute,hqq}:
\begin{itemize}
\item 4.25-bit: \texttt{nbits=4, group\_size=64},
\item 4.02-bit: \texttt{nbits=4, group\_size=1024},
\item 3.25-bit: \texttt{nbits=3, group\_size=64}.
\end{itemize}
Since the publicly available optimized FLUTE kernel does not support a group size of 1024, we only report inference latency results for the 4.25-bit and 3.25-bit configurations.

\paragraph{QTIP  \citep{qtip}.} For data-free QTIP, we approximate the Hessian as the identity matrix and follow the same algorithmic implementation as the original data-aware QTIP. For data-aware QTIP, we use the publicly available Hessian approximation from the \texttt{relax-ml} HuggingFace repository, computed using $6144 \times 4096$ tokens.

\paragraph{HIGGS~\citep{higgs}.} As the implementation of HIGGS is not publicly available, we directly report results from their paper. Specifically, for the mixed-scheme baseline, HIGGS provides only a single result for each bitwidth, which we directly use in our comparison. For the single-scheme VQ baseline, HIGGS reports multiple configurations for each bitwidth; we select the configuration achieving the lowest WikiText2 perplexity. Although these single-scheme VQ configurations may not be efficiently realizable in practice due to non-power-of-two codebook sizes, we include them for completeness and comparison purposes.

\subsection{Additional results}

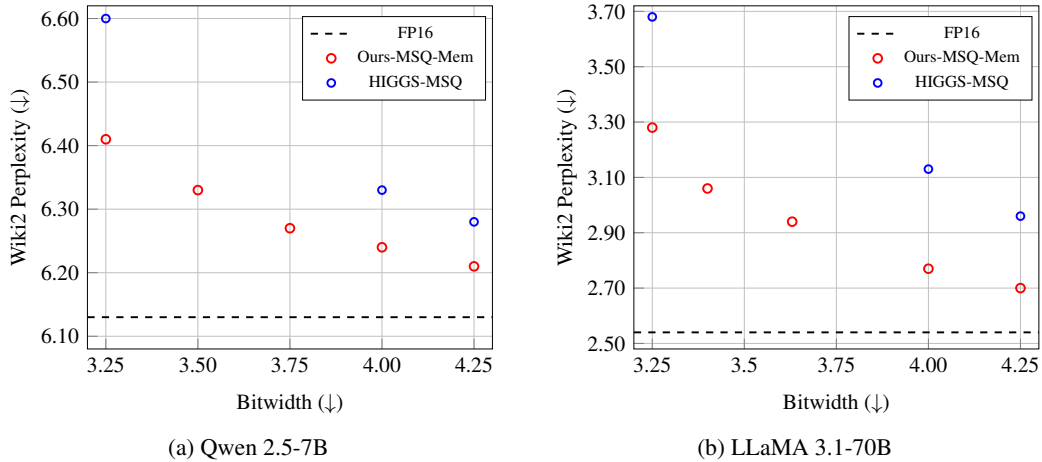
\begin{figure}[t]
    \centering

    % ===== (a) Qwen 2.5-7B =====
    \begin{subfigure}[b]{0.48\linewidth}
        \centering
        \resizebox{\linewidth}{!}{
        \begin{tikzpicture}
        \begin{axis}[
            width=7.5cm, height=6.6cm,
            every axis plot/.append style={thick},
            grid=major,
            scaled ticks=false,
            ylabel near ticks,
            tick pos=left,
            tick label style={font=\small},
            xtick={3.25,3.50,3.75,4.00,4.25},
            xticklabels={3.25,3.50,3.75,4.00,4.25},
            ytick={6.1,6.2,6.3,6.4,6.5,6.6},
            yticklabels={6.10,6.20,6.30,6.40,6.50,6.60},
            label style={font=\small},
            xlabel={Bitwidth ($\downarrow$)},
            xlabel style={at={(0.5,0)}},
            ylabel={Wiki2 Perplexity ($\downarrow$)},
            ylabel style={align=center, at={(-0.12,0.5)}},
            xmin=3.20, xmax=4.30,
            ymin=6.08, ymax=6.62,
            legend style={legend columns=1, at={(0.53,0.97)}, anchor=north west, font=\scriptsize, cells={align=left}}
        ]
        % FP16 baseline (6.13)
        \addplot[dashed, thick, black, domain=3.20:4.30] {6.13};
        \addlegendentry{FP16}

        % Ours-MSQ-Mem (red, circles)
        \addplot[
            red, mark=o, mark size=1.8pt, only marks
        ] coordinates {
            (3.25,6.41)
            (3.50,6.33)
            (3.75,6.27)
            (4.00,6.24)
            (4.25,6.21)
        };
        \addlegendentry{Ours-MSQ-Mem}

        % HIGGS-MSQ (blue, squares) — only available bitwidths
        \addplot[
            blue, mark=o, mark size=1.6pt, only marks
        ] coordinates {
            (3.25,6.60)
            (4.00,6.33)
            (4.25,6.28)
        };
        \addlegendentry{HIGGS-MSQ}

        \end{axis}
        \end{tikzpicture}}
        \caption{Qwen 2.5-7B}
        \label{fig:mixed_qwen}
    \end{subfigure}
    \hfill
    % ===== (b) LLaMA 3.1-70B =====
    \begin{subfigure}[b]{0.48\linewidth}
        \centering
        \resizebox{\linewidth}{!}{
        \begin{tikzpicture}
        \begin{axis}[
            width=7.5cm, height=6.6cm,
            every axis plot/.append style={thick},
            grid=major,
            scaled ticks=false,
            ylabel near ticks,
            tick pos=left,
            tick label style={font=\small},
            xtick={3.25, 3.5, 3.75,4.00,4.25},
            xticklabels={3.25,3.5,3.75,4.00,4.25},
            ytick={2.5,2.7,2.9,3.1,3.3,3.5,3.7},
            yticklabels={2.50,2.70,2.90,3.10,3.30,3.50,3.70},
            label style={font=\small},
            xlabel={Bitwidth ($\downarrow$)},
            xlabel style={at={(0.5 ,0)}},
            ylabel={Wiki2 Perplexity ($\downarrow$)},
            ylabel style={align=center, at={(-0.12,0.5)}},
            xmin=3.20, xmax=4.30,
            ymin=2.48, ymax=3.72,
            legend style={legend columns=1, at={(0.53,0.97)}, anchor=north west, font=\scriptsize, cells={align=left}}
        ]
        % FP16 baseline (2.54)
        \addplot[dashed, thick, black, domain=3.20:4.30] {2.54};
        \addlegendentry{FP16}

        % Ours-MSQ-Mem (red, circles)
        \addplot[
            red, mark=o, mark size=1.8pt, only marks
        ] coordinates {
            (3.25,3.28)
            (3.40,3.06)
            (3.63,2.94)
            (4.00,2.77)
            (4.25,2.70)
        };
        \addlegendentry{Ours-MSQ-Mem}

        % HIGGS-MSQ (blue, squares) — only available bitwidths
        \addplot[
            blue, mark=o, mark size=1.6pt, only marks
        ] coordinates {
            (3.25,3.68)
            (4.00,3.13)
            (4.25,2.96)
        };
        \addlegendentry{HIGGS-MSQ}

        \end{axis}
        \end{tikzpicture}}
        \caption{LLaMA 3.1-70B}
        \label{fig:mixed_llama70b}
    \end{subfigure}

    \vspace{-0.4em}
    \caption{Mixed-scheme quantization results on Qwen 2.5-7B and LLaMA 3.1-70B models. To accommodate the broader sensitivity range in LLaMA 3.1-70B, we extended the quantizer set to include higher-bitwidth options (NUQ 7/8 bits and VQ 5.5/6 bits), in addition to the TCQ quantizers.}
    \label{fig:add_mixed_scheme_results}
\end{figure}

\subsubsection{Memory-constrained mixed scheme quantization results on additional models}
To demonstrate the generality of our method, we applied our MSQ method to Qwen 2.5-7B (non-LLaMA) and LLaMA 3.1-70B (large-scale), comparing against HIGGS-MSQ under various bitwidth constraints \citep{qwen25,llama3,higgs}. Here, for LLaMA 3.1-70B, due to its substantially larger model size, we used 64K tokens (\ie, half of the default setting explained in \Cref{app:MSQ_details}) to estimate the sensitivity coefficients while keeping the rest of the quantization pipeline unchanged.

As shown in \Cref{fig:add_mixed_scheme_results}, our method consistently outperforms HIGGS-MSQ under the same bitwidth constraints (3.25, 4.00, 4.25) on both models. Additionally, our method achieves comparable or better perplexity at lower bitwidths compared to HIGGS-MSQ. For Qwen 2.5-7B, our 3.5-bit model matches the performance of HIGGS-MSQ at 4.00 bits, and our 3.75-bit result slightly improves upon the HIGGS-MSQ result at 4.25 bits, yielding up to 12.5\% memory savings. A similar trend is observed on LLaMA 3.1-70B, where our 3.40-bit and 3.63-bit results slightly outperform HIGGS-MSQ at 4.00 and 4.25 bits, respectively, resulting in up to 15\% memory savings at better perplexity. These results demonstrate the broad applicability of our MSQ method.

\begin{table}[t]
    \centering
    \caption{Ablation study of layer fusion and CUDA-Core kernel usage on inference throughput and WikiText2 perplexity (LLaMA 3.1-8B, batch size=1, RTX4090). ‘TC’ and ‘CC’ denote Tensor Core and CUDA Core kernels, respectively.}
    \begin{tabular}{lccc}
    \toprule
        Method & Throughput (Toks/s) ($\uparrow$) & Wiki2 ($\downarrow$)\\
        \midrule
        FP16       &  62 & 5.61 \\
        \midrule
        Ours-VQ-2 (single scheme) & 231 &5905.08 \\
        Ours-MSQ-Lat (No Fusion, TC Only)  &  228 & 119.72\\
        Ours-MSQ-Lat (Fusion-aware, TC only) &  232 & 8.93\\
        Ours-MSQ-Lat (Fusion-aware, TC and CC)  & 231 & \textbf{8.71}\\
        \midrule
        Ours-TCQ-2 (single scheme) & 223 & 37.95\\
        Ours-MSQ-Lat (No Fusion, TC Only)   &  223 & 20.33\\
        Ours-MSQ-Lat (Fusion-aware, TC only) &  224 & 7.79\\
        Ours-MSQ-Lat (Fusion-aware, TC and CC)  & 223 & \textbf{7.69}\\
        \midrule
        Ours-TCQ-3 (single scheme) & 185 & 6.78\\
        Ours-MSQ-Lat (No Fusion, TC Only)  & 187 & 6.47 \\
        Ours-MSQ-Lat (Fusion-aware, TC only) & 186 & 6.06 \\  
        Ours-MSQ-Lat (Fusion-aware, TC and CC)  & 185 & \textbf{6.03}\\
        \bottomrule
    \end{tabular}
    \label{tab:app_ablation1}
\end{table}

\begin{table}[t]
    \centering
    \caption{Ablation comparing `linearity' vs. `actual' loss terms in data-aware MSQ (LLaMA 2-7B, RTX4090 GPU)}
    \begin{tabular}{lccccccccccccc}
    \toprule
     & \multicolumn{5}{c}{LLaMA 2 7B} \\
         \cmidrule(lr){2-6} 
        &  & &  & \multicolumn{2}{c}{Throughput (Toks/s)} \\
        \cmidrule(lr){5-6}
        Method& Bits & Wiki2 ($\downarrow$) & Acc ($\uparrow$) & $B=1$& $B=8$  \\
         \cmidrule(lr){1-1}\cmidrule(lr){2-6} 
        FP16 & 16.00 & 5.12 &64.9& 71& 527 \\ 
         \cmidrule(lr){1-1}\cmidrule(lr){2-6}
        QTIP & 2.00 & 6.84 &58.9& 209& 386 \\
        Ours-MSQ-Mem (linearity) & 2.00 & 6.69 &\textbf{60.3}  &270 &\textbf{1690} \\
        Ours-MSQ-Mem (actual)& 2.00 & \textbf{6.47} &\textbf{60.3}  &\textbf{272} &1684 \\
         \cmidrule(lr){1-1}\cmidrule(lr){2-6}
        QTIP & 3.00 & 5.39 &63.3& 184&304\\
        Ours-MSQ-Mem (linearity) & 3.00 & 5.38 & \textbf{64.3} & \textbf{225}& \textbf{1501}\\
        Ours-MSQ-Mem (actual) & 3.00 & \textbf{5.34} & 63.9 & 224& 1489\\
        \bottomrule
    \end{tabular}
    \label{tab:app_ablation2}
\end{table}

\subsubsection{Ablation on layer fusion and CUDA-Core kernel integration}

\Cref{tab:app_ablation1} presents an ablation study evaluating the impact of layer fusion and the integration of CUDA-Core kernels on inference throughput and WikiText2 perplexity for the quantized LLaMA 3.1-8B model. We compare single-scheme baselines (VQ-2, TCQ-2, and TCQ-3) in Q-Palette against several MSQ variants: MSQ without layer fusion (Tensor Core kernels only), fusion-aware MSQ using only Tensor Core kernels, and fusion-aware MSQ combining both Tensor Core and CUDA Core kernels. Results demonstrate significant improvements in WikiText2 perplexity when applying fusion-aware MSQ compared to single-scheme quantization and MSQ without fusion, highlighting the effectiveness of jointly optimizing quantization schemes and layer fusion. 
For example, MSQ without fusion achieves 20.33 perplexity at 223 tokens/sec, while our fusion-aware MSQ achieves a significantly reduced perplexity of \textbf{7.79} at 224 tokens/sec.
Additionally, integrating CUDA Core kernels alongside Tensor Core kernels provides further performance improvement.

\subsubsection{Effect of loss-term choice in data-aware MSQ}

\Cref{tab:app_ablation2} provides an ablation study comparing two different loss-term definitions (`linearity' vs.\ `actual') used in data-aware MSQ quantization for LLaMA 2-7B. The `linearity' loss term efficiently approximates the increase in perplexity loss via sensitivity coefficients measured using the linearity theorem, enabling reuse across multiple quantization configurations, similar to the data-free scenario (see \Cref{app:MSQ_details}). In contrast, the `actual' loss term explicitly computes the empirical (validation) perplexity increase caused by quantization. Our results demonstrate that using the computationally efficient `linearity' loss term achieves comparable zero-shot accuracy improvements to those obtained with the `actual' loss term, indicating that the simpler and reusable linearity-based approach is also effective in practice. Additionally, both loss-term approaches achieve similar inference throughput, reinforcing the practicality of the computationally efficient `linearity' loss term.

% \begin{table}[t]
%     \centering
%     \caption{Performance comparison with and without incoherence processing (IP) on LLaMA 3.1-8B.}
%     \begin{tabular}{lccccccc}
%     \toprule
%     Method & Bitwidth & IP & Bit allocation strategy & Wiki2 ($\downarrow$)\\
%     \midrule
%     Ours-NUQ-3 & 3 & X & Uniform bitwidth & 36.86\\
%     Ours-MSQ-Mem &3 & X & Bit allocation by Gaussian-assumed error & 11.73 \\
%     Ours-MSQ-Mem &3 & X & Bit allocation by true quantization error & 10.76 \\
%     Ours-MSQ-Mem &3 & O & Bit allocation by Gaussian-assumed error & 6.84 \\
%     \midrule
%     Ours-NUQ-3 & 4 & X & Uniform bitwidth & 6.21\\
%     Ours-MSQ-Mem &4 & X & Bit allocation by Gaussian-assumed error & 6.17 \\
%     Ours-MSQ-Mem &4 & X & Bit allocation by true quantization error & 6.13 \\
%     Ours-MSQ-Mem &4 & O & Bit allocation by Gaussian-assumed error & 5.92 \\
%     \bottomrule
%     \end{tabular}
%     \label{tab:rebuttal_3_1}
% \end{table}

\subsubsection{Applicability of MSQ with linearity-theorem surrogate without incoherence processing}

\begin{table}[t]
    \centering
    \caption{Per-group quantization results on LLaMA 3.1-8B ($\text{group size}=64$) without IP. For MSQ, we used a pool of per-group NUQ quantizers ranging from 2 to 8 bits with group size 64. All results are reported without IP.}
    \begin{tabular}{lccccccc}
    \toprule
    Method & Bitwidth &  Bit allocation strategy & Wiki2 ($\downarrow$)\\
    \midrule
    G64-NUQ-3 & 3.25 &  Uniform bitwidth & 8063.91\\
    G64-MSQ-Mem &3.25 & MSQ with Gaussian-assumed error & 7.34 \\
    G64-MSQ-Mem &3.25 & MSQ with true quantization error & \textbf{7.33} \\
    \midrule
    G64-NUQ-4 & 4.25 &  Uniform bitwidth & 6.10\\
    G64-MSQ-Mem &4.25 & MSQ with Gaussian-assumed error & \textbf{5.89} \\
    G64-MSQ-Mem &4.25 & MSQ with true quantization error & 5.90 \\
    \bottomrule
    \end{tabular}
    \label{tab:rebuttal_3_4}
\end{table}

A natural question is whether the MSQ framework based on the linearity-theorem surrogate  \citep{higgs} remains applicable when incoherence processing (IP) is not available due to the hardware constraints.
This surrogate objective requires sensitivity coefficients and per-layer quantization errors; with IP, weights are nearly Gaussian, allowing these errors to be precomputed from random Gaussian matrices as explained in \Cref{app:MSQ_details}.

To examine the no-IP case, where per-group quantization is typically adopted to handle weight outliers, we disable IP and quantize on a per-group basis ($\text{group size}=64$) using fixed Gaussian-trained codebooks.
We compare three bit allocation strategies: 1) uniform bitwidth, 2) MSQ with \emph{Gaussian-assumed error}, which relies on cached Gaussian distortion estimates for $\mathrm{err}(Q_q;W_l)$ and solves \Cref{eq:mpq}, and 3) MSQ with \emph{true quantization error}, which measures layerwise distortion $\mathrm{err}(Q_q;W_l)$ directly and solves \Cref{eq:mpq}.

As shown in \Cref{tab:rebuttal_3_4}, MSQ significantly outperforms uniform bitwidth even in the no-IP case.
Moreover, the Gaussian-assumed error achieves perplexity almost identical to that from the true quantization error (\eg, $7.34$ vs.\ $7.33$ at 3.25 bits), providing preliminary evidence that cached Gaussian-based error estimates may remain reliable in the no-IP case.

\section{Limitations and future work}
\label{app:limitation}

We introduce Q-Palette, a comprehensive suite of quantizers spanning a wide range of trade-offs across memory footprint, inference latency, and quantization error, offering versatile options suitable for diverse deployment scenarios. To demonstrate its effectiveness, we integrate Q-Palette into an MSQ framework and validate its ability to achieve improved performance-efficiency trade-offs under PTQ settings. However, our framework is designed around one-shot MSQ objectives, which rely on layer-wise second-order approximations of end-to-end loss and are primarily applicable to scenarios that do not involve retraining \citep{higgs}. While Q-Palette can also serve as a building block for retraining-based quantization workflows such as quantization-aware training, which may be preferable in cases where larger computational budgets and data are available, we have not evaluated its effectiveness in that setting, as our primary focus is on data-free or calibration-light PTQ. Extending Q-Palette to retraining-based quantization workflows thus remains a promising direction for future work.

Another limitation lies in the cost of computing the sensitivity coefficients.
Currently, evaluating these coefficients requires $O(L)$ computations of the KL-divergence loss in data-free setting as explained in \Cref{app:MSQ_details}, which can become a bottleneck as the model size grows.
Although this computation can be performed in an embarrassingly parallel manner and the resulting coefficients can be reused across all MSQ runs, thus representing a one-time cost, the overhead may still be non-negligible when the set of target memory or latency is fixed and reuse is limited.
Developing methods to further reduce this cost is therefore an interesting direction for future research.

One promising direction is the extension of Q-Palette to weight-activation quantization.
In this work we focus on weight-only PTQ, which is particularly effective in memory-bound inference settings with small batch sizes, such as on laptops or mobile devices where memory bandwidth, rather than compute, is the primary bottleneck.
However, on some hardware accelerators, such as the Qualcomm Hexagon NPU, which natively support only integer (e.g., INT8) GEMM, activation quantization is essential for exploiting their full performance.
Thus, extending Q-Palette to support weight-activation quantization is a natural direction for broader deployment.
One potential approach is a two-stage scheme:
(1) first quantize weights to INT8 using uniform W8A8 quantizers for hardware compatibility; and
(2) then apply a secondary compression step that further quantizes the INT8 weights into $x$-bit representations using a variant of Q-Palette quantizers whose codebooks are constrained to the INT8 grid.
During inference, the compressed weights are dequantized back to INT8 and then processed using integer GEMM with INT8-quantized activations, enabling compatibility with INT8-only hardware while reducing memory usage.
A similar idea was introduced in Q-Serve, which quantizes weights in two stages, first to symmetric INT8 grid and then to asymmetric INT4 \citep{qserve}.
We consider exploring such extensions an interesting direction for future work.

\end{document}